\documentclass{article}

\usepackage{microtype}
\usepackage{graphicx}
\usepackage{subcaption}
\usepackage{booktabs} 

\usepackage{hyperref}

\usepackage[accepted]{icml2026}

\usepackage{amsmath}
\usepackage{amssymb}
\usepackage{mathtools}
\usepackage{amsthm}
\usepackage{enumitem}

\usepackage[capitalize,noabbrev]{cleveref}

\theoremstyle{plain}
\newtheorem{theorem}{Theorem}[section]
\newtheorem{proposition}[theorem]{Proposition}

\theoremstyle{definition}
\newtheorem{definition}[theorem]{Definition}
\newtheorem{assumption}[theorem]{Assumption}
\theoremstyle{remark}

\usepackage[textsize=tiny]{todonotes}

\usepackage{amsthm}
\usepackage{tikz}
\usetikzlibrary{arrows.meta,positioning,calc}
\usepackage[normalem]{ulem}

\icmltitlerunning{Large Language Models Explore by Latent Distilling}

\usepackage{multirow}
\usepackage{longtable}

\renewcommand{\algorithmiccomment}[1]{\hfill $\triangleright$ #1}

\usepackage{xspace}
\newcommand{\ES}{\texttt{ESamp}\xspace}
\newcommand{\LD}{\texttt{LD}\xspace}

\usepackage{xcolor}

\begin{document}

\twocolumn[
\icmltitle{Large Language Models Explore by Latent Distilling}





\begin{icmlauthorlist}
\icmlauthor{Yuanhao Zeng}{shanghaitech,bigai}
\icmlauthor{Ao Lu}{shanghaitech}
\icmlauthor{Lufei Li}{shanghaitech}
\icmlauthor{Zheng Zhang}{shanghaitech,bigai}
\icmlauthor{Yexin Li}{bigai}
\icmlauthor{Kan Ren}{shanghaitech}
\end{icmlauthorlist}

\icmlaffiliation{bigai}{State Key Laboratory of General Artificial Intelligence, BIGAI, Beijing, China}
\icmlaffiliation{shanghaitech}{School of Information Science and Technology, ShanghaiTech University, Shanghai, China}

\icmlcorrespondingauthor{Yexin Li}{liyexin@bigai.ai}
\icmlcorrespondingauthor{Kan Ren}{renkan@shanghaitech.edu.cn}

\icmlkeywords{Machine Learning, ICML}

\vskip 0.3in
]


\printAffiliationsAndNotice{}  

\begin{abstract}
Generating diverse responses is crucial for test-time scaling of large language models (LLMs), yet standard stochastic sampling mostly yields surface-level lexical variation, limiting semantic exploration. In this paper, we propose Exploratory Sampling (\ES), a decoding approach that explicitly encourages semantic diversity during generation. \ES is motivated by the well-known observation that neural networks tend to make lower-error predictions on inputs similar to those encountered before, and incur higher prediction error on novel ones. Building on this property, we train a lightweight Distiller at test time to predict deep-layer hidden representations of the LLM from its shallow-layer representations to model the LLM’s depth-wise representation transitions. During decoding, the Distiller continuously adapts to the mappings induced by the current generation context. \ES uses the prediction error as a novelty signal to reweight candidate token extensions conditioned on the current prefix, thereby biasing decoding toward less-explored semantic patterns. \ES is implemented with an asynchronous training--inference pipeline, with less than 5\% worst case overhead (1.2\% in the optimized release). Empirical results show that \ES significantly boosts the Pass@k efficiency of reasoning models, showing superior or comparable performance to strong stochastic and heuristic baselines. Notably, \ES achieves robust generalization across mathematics, science, and code generation benchmarks and breaks the trade-off between diversity and coherence in creative writing. Our code has released at: \url{https://github.com/LinesHogan/tLLM}.

\end{abstract}

\section{Introduction}

Training-free test-time scaling methods have emerged as powerful paradigms for enhancing the reasoning capabilities of large language models (LLMs) \cite{snell2024scaling}. By generating a batch of candidate solutions and applying selection mechanisms such as reranking~\cite{cobbe2021trainingverifierssolvemath}, self-verification~\cite{weng2023large}, or majority voting~\cite{wang2023selfconsistency}, these approaches consistently outperform greedy decoding.

However, the effectiveness of test-time scaling is fundamentally constrained by the diversity of underlying reasoning strategies present in the candidate set \cite{dorner2025roc, chenprovable}. If the model produces candidates that are lexically different yet rely on the same core reasoning structure, such as repeating identical logic or systematic failure patterns, subsequent selection mechanisms may be unable to recover correct solutions \cite{wang2025diversity}.

Unfortunately, naive sampling strategies \cite{ACKLEY1985147} frequently fall into this trap, as stochastic perturbations at the token level tend to induce surface-level lexical variation without substantially altering the underlying reasoning strategy. As a result, simply increasing the number of sampled candidates often leads to highly redundant solutions, yielding diminishing returns in downstream selection. Attempts to address this limitation include structured search algorithms that explore solution trees \cite{yao2023tree, kool2019stochastic, Vijayakumar_Cogswell_Selvaraju_Sun_Lee_Crandall_Batra_2018} and heuristic sampling constraints that modify probability distribution \cite{zhang2024edtimprovinglargelanguage, chen-etal-2025-flaming, Holtzman2020The, minh2025turning}. However, structured search methods typically incur substantial computational and latency overhead, while heuristic constraints primarily reshape surface distributions and remain limited in their ability to elicit genuinely novel solution strategies \cite{wang2025diversity}. We argue that effective test-time scaling requires a mechanism that can efficiently encourage novelty in the model’s underlying reasoning behavior.

In this paper, we propose \textbf{Exploratory Sampling} (\ES), a decoding algorithm that encourages LLM to explore by penalizing tokens that are consist with predictable latent representations during generation, thereby steering generation toward less-explored semantic regions.

Our method is motivated by RND \cite{burda2018exploration} and grounded in the observation that neural networks tend to make more accurate predictions on mappings that they have encountered before, while exhibiting larger prediction errors on previously unseen ones. Building on this property, we introduce a lightweight \textbf{Latent Distiller} (\LD) that is trained online at test time to approximate the mapping from shallow-layer to deep-layer hidden representations within the LLM. As the distiller adapts to the representation mappings induced by the current generation context, familiar mappings yield low prediction error, whereas unfamiliar ones produce higher error. We interpret such high-error mappings as indicators of under-explored semantic or reasoning behaviors.

To incorporate this novelty signal into decoding, we formulate generation as a KL-regularized optimization objective \cite{peters2010relative}, where deviations from the base model are softly constrained while encouraging exploration. Within this framework, the model is rewarded for selecting token extensions that lead to under-explored internal representation mappings. This objective admits a closed-form optimal solution: the token distribution of the base model is reweighted by an exponential function of the novelty reward, analogous to entropy-regularized policy optimization \cite{10.5555/3692070.3693554}.

In practice, we approximate this reweighting by projecting the distiller’s prediction error onto the next candidate tokens conditioned on the current prefix, using the resulting scores to bias sampling. This procedure naturally suppresses probability mass on redundant continuations that correspond to familiar representation mappings, while steering generation toward less-explored semantic behaviors. Moreover, because the distiller is updated online across the entire candidate batch, parallel sequences implicitly coordinate to avoid repeating the same underlying reasoning patterns, enabling more effective batch-level exploration.

Implemented via an asynchronous pipeline incurring less than 5\% throughput overhead\footnote{
The throughput number reported in the main paper was measured with an internal research implementation. Our open-source implementation, released as \ES in the tLLM framework at \url{https://github.com/LinesHogan/tLLM}, obtains higher throughput in the aligned benchmark reported in Appendix~\ref{app:tllm}. This open-source version intentionally preserves a general producer--consumer interface for future developer extensions. Further throughput gains are possible if one sacrifices part of this generality and extensibility for task-specific specialization.
} in standard serving scenarios, \ES significantly enhances diversity without sacrificing generation quality. Unlike heuristic methods that may hamper capabilities in specific domains (e.g., code generation), \ES demonstrates robust effectiveness across diverse model families and tasks, especially boosting the Pass@k \cite{chen2021evaluating} efficiency of reasoning models, outperforming strong stochastic and heuristic baselines

In summary, our contributions are: 
\begin{itemize}[leftmargin=3mm]
    \item We introduce \ES, a novel decoding method that effectively encourages exploration in LLM generation, particularly with reasoning models that have reflective abilities.
    \item We present comprehensive experiments across benchmarks, model families and sizes, showing that \ES outperforms standard stochastic sampling and heuristic methods, exhibiting superior Pass@k scaling efficiency, particularly empowering Pass@k in reasoning models with significantly smaller sampling budgets.
    \item We implement a high-efficiency asynchronous pipeline that decouples the distiller’s training and inference from the main LLM generation. This design ensures that \ES incurs negligible latency overhead in standard serving scenarios, making it practical for large-scale deployment.
\end{itemize}

\section{Related Work}

\subsection{Decoding Strategies to Generate Diverse Responses} 
\label{sec:related_sampling}

\paragraph{Stochastic Sampling}
To mitigate the degeneration of deterministic decoding, stochastic strategies typically introduce heuristic constraints to truncate the probability distribution. Methods such as Top-$p$ \citep{Holtzman2020The} and its adaptive variants, including Min-$p$ \citep{minh2025turning} and entropy-based sampling \citep{zhang2024edtimprovinglargelanguage, chen-etal-2025-flaming}, achieve this by drawing tokens from a restricted candidate pool to inject randomness. While computationally efficient, these approaches primarily produce lexical diversity and often fail to distinguish between meaningful semantic exploration and superficial syntactic variation.

\paragraph{Structured Search} 
In contrast, structured search algorithms, including Diverse Beam Search \citep{Vijayakumar_Cogswell_Selvaraju_Sun_Lee_Crandall_Batra_2018}, Stochastic Beam Search \citep{kool2019stochastic}, and Tree of Thoughts \citep{yao2023tree}, treat generation as a tree-based exploration problem. These approaches explicitly traverse the solution space to uncover high-quality reasoning trajectories. However, their reliance on multiple explicit branches or backtracking introduces substantial computational overhead, making them impractical for high-throughput generation tasks.

\subsection{Steering Generation via Logit-Level Control}
\label{sec:related_ours}

While heuristic methods such as Contrastive Decoding \citep{li-etal-2023-contrastive} explored logit-level modifications, Controlled Decoding \citep{10.5555/3692070.3693554} formalized this process as a token-level KL-regularized reinforcement learning problem, showing that optimal steering can be achieved by reweighting logits with a learned value function. Building on this theoretical framework, subsequent works, such as DeRa \citep{Liu2024decoding} and OverRIDE \citep{anonymous2025diverse}, adopt similar formulations for controlled generation.

In this regard, our work is conceptually most similar to OverRIDE \citep{anonymous2025diverse}, which also introduces online adaptation to suppress redundancy. However, a critical distinction lies in the abstraction level: OverRIDE operates in the vocabulary space, penalizing token repetition, which may fail to capture semantically equivalent sequences expressed with different surface forms. In contrast, \ES utilizes \LD to estimate redundancy in the continuous representation space. This allows us to penalize recurring semantics rather than specific tokens, achieving robust exploration even when surface forms vary.

\section{Problem Formulation}
\label{sec:formulation}

We model LLM generation as an MDP $(\mathcal{S}, \mathcal{V}, \pi_\theta)$: at step $t$, the state $s_t = (z_1, \dots, z_{t-1})$ is the token prefix, $\mathcal{V}$ is the vocabulary, and $\pi_\theta(z_t \mid s_t)$ is the policy induced by the LLM weights~$\theta$. Standard decoding selects tokens by maximizing likelihood or applying heuristic truncations~\citep{Holtzman2020The}, which tends to produce lexically varied but semantically redundant candidates. For tasks requiring reasoning or creativity, effective generation demands exploring the solution space to uncover diverse valid trajectories.

To this end, we consider a parallel batch of $K$ trajectories and introduce a per-step intrinsic reward $r(s_t, z_t)$ that measures the degree to which selecting token $z_t$ steers generation toward semantic regions not yet explored by the batch. Following the KL-regularized policy optimization framework~\citep{peters2010relative, korbak-etal-2022-rl}, we optimize:
\begin{equation}
  J(\pi) = \mathbb{E}_\pi\bigl[r(s_t, z_t)\bigr] - \alpha\,\mathrm{KL}\bigl(\pi(\cdot \mid s_t) \,\|\, \pi_{\mathrm{ref}}(\cdot \mid s_t)\bigr),
  \label{eq:objective}
\end{equation}
where $\pi_{\mathrm{ref}}$ is the frozen pre-trained LLM and $\alpha > 0$ controls regularization strength. This objective admits a closed-form optimal policy (derivation in Appendix~\ref{app:closed_form}):
\begin{equation}
  \pi^*(z \mid s) \propto \pi_{\mathrm{ref}}(z \mid s)\, \exp\!\Bigl(\tfrac{1}{\alpha}\,r(s, z)\Bigr).
  \label{eq:optimal_policy}
\end{equation}
The per-step formulation above is valid when the novelty reward satisfies a \emph{vanishing redundancy} condition: once any trajectory in the batch has explored a semantic region, subsequent visits to that region yield near-zero reward, causing the sequential $Q$-function to reduce to the immediate reward $Q^*(s_t,z_t) \approx r(s_t,z_t)$. We formalize this condition and discuss its practical relaxation in Appendix~\ref{app:reward_structure}.

Our goal is to construct an online estimator for $r(s, z)$ that captures how each candidate token redirects generation toward under-explored semantic regions, thereby realizing $\pi^*$ in practice.

\section{Methodology}
\label{sec:method}

\begin{figure}[t]
    \centering
    \includegraphics[width=\linewidth]{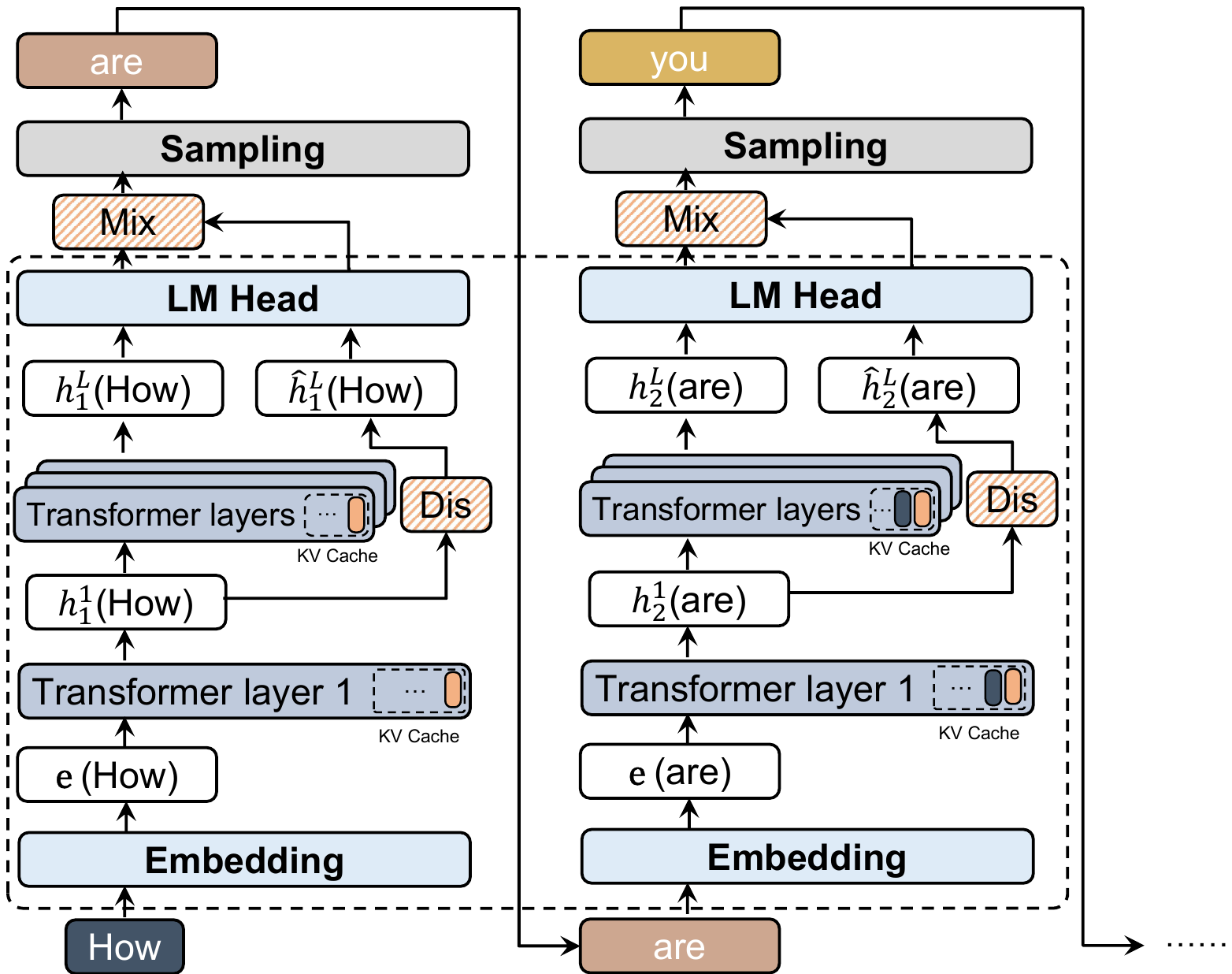}
    \caption{The overall framework of our method. \ES intervenes solely during the decode phase. When the first transformer layer outputs hidden states, the Latent Distiller takes them as input to predict the output of the transformer stack's final layer. The predicted hidden states are projected into the vocabulary space via the shared Language Modeling Head to obtain distilled logits. Finally, these logits are mixed with the original LLM logits to serve as the source for sampling. The transformer layer gather information from context, where \LD get the context-aware representation.}
\label{fig:overview}
\end{figure}

In this section, we propose \textbf{Exploratory Sampling} (\ES), a decoding method that promotes semantic exploration. Our approach assesses predictability in the LLM’s internal representations to identify recurring semantic patterns in the generated history. Central to our method is the \textbf{Latent Distiller} (\LD), a lightweight module trained online to model the LLM’s depth-wise representation transitions from early to late hidden layers, enabling us to discourage semantically redundant generations even when surface forms differ.

\subsection{Novelty Estimation via Latent Distiller}
\label{sec:curiosity_estimation}

Standard exploration strategies typically operate in the token space. 
Rather than operating in token space, we ground exploration in the model's internal representations.
By leveraging hidden-layer representations of the generated prefix, we obtain a continuous embedding of the current context that captures its semantic content. Exploration is then guided by encouraging generated contexts to occupy diverse regions of the representation space, rather than merely inducing superficial lexical variation.

Motivated by RND~\citep{burda2018exploration}, we introduce a lightweight Multi-Layer Perceptron (MLP), termed the \LD $f_\phi$ parameterized by $\phi$, to learn a mapping from shallow-layer to deep-layer hidden representations of the LLM. Specifically, given the hidden representation of the generated prefix at an early layer $h_t^1$, the distiller predicts the corresponding deep-layer representation $\hat{h}_t^L$:
\begin{equation}
\hat{h}_t^L = f_\phi(h_t^1).
\end{equation}

The distiller is trained online by minimizing a mean squared error objective over hidden representations encountered during generation. As training proceeds, $f_\phi$ progressively captures representation mappings associated with previously generated contexts. Consequently, low prediction error indicates that the current representation mapping is consistent with past contexts and thus semantically redundant, whereas high prediction error reflects a deviation in representation space, signaling a novel semantic configuration.

\subsection{Novelty-Driven Generation}
\label{sec:sampling}

Our goal is to quantify how each candidate token $z \in \mathcal{V}$ contributes to extending the prefix toward a more novel semantic region with an action-dependent intrinsic reward $r(s,z)$. 
We achieve this by mapping both the true deep-layer representation and the distiller-predicted representation into the vocabulary space using the frozen language modeling head $W_{\text{head}}$ of the LLM:
\begin{align}
    & \pi_{\text{ref}} = \text{softmax}(W_{\text{head}} h_t^L) \\
    & q_{\text{dist}} = \text{softmax}(W_{\text{head}} \hat{h}_t^L)
\end{align}
Here, $q_{\text{dist}}$ represents the probability distribution expected by the \LD. We define the intrinsic reward as the log-likelihood ratio: $r(s,z) = \log \pi_{\text{ref}}(z|s) - \log q_{\text{dist}}(z|s)$.
Substituting this into the optimal KL-regularized policy derivation in Eq.~\eqref{eq:optimal_policy}, we obtain the sampling distribution:
\begin{align}
    \label{eq:sampling_rule}
    \notag
    \pi_{\text{new}}(z|s) & \propto \pi_{\text{ref}}(z|s) \cdot \exp\left( \beta \cdot r(s,z) \right) \\ 
    & \propto \frac{\pi_{\text{ref}}(z|s)^{1+\beta}}{q_{\text{dist}}(z|s)^{\beta}};
\end{align}
where $\beta = \frac{1}{\alpha}$ is a hyperparameter controlling the exploration intensity. A more detailed derivation is provided in Appendix \ref{app:rl_proof}.

\paragraph{Geometric Interpretation.} 

In the logit space, the operation in Eq.~\eqref{eq:sampling_rule} is equivalent to a linear combination of the reference and distilled logits. Let $\text{logit}_{\text{ref}} = W_{\text{head}} h^L_t$ and $\text{logit}_{\text{dist}} = W_{\text{head}} \hat{h}^L_t$, then the new sampling logits are:
\begin{equation}
    \label{eq:logits}
    \text{logit}_{\text{new}} = \text{logit}_{\text{ref}} + \beta (\text{logit}_{\text{ref}} - \text{logit}_{\text{dist}})
\end{equation}
Rearranging Eq.~\eqref{eq:logits}, we obtain:
\begin{align}
    \notag
    \Delta \text{logit} & = \text{logit}_{\text{new}} - \text{logit}_{\text{ref}} = \beta (\text{logit}_{\text{ref}} - \text{logit}_{\text{dist}}) \\ 
    & = \beta W_{\text{head}} (h_t^L - \hat{h}_t^L)
\end{align}

Let $\mathbf{e}_t = h^L_t - \hat{h}^L_t$, which is the \textbf{latent error vector} at step $t$, then the logit change for a token $z$ can be written as:
\begin{equation}
    \label{eq:decomposition}
    \Delta \text{logit}_z = \beta w_z \cdot \mathbf{e}_t = \beta \| w_z \|_2 \cdot \underbrace{\| \mathbf{e}_t \|_2}_{\text{Novelty}} \cdot \underbrace{\cos(w_z, \mathbf{e}_t)}_{\text{Direction}}
\end{equation}
where $w_z$ denotes the row of $W_{\text{head}}$ corresponding to $z$.

Eq.~\eqref{eq:decomposition} highlights two critical aspects of our sampling mechanism:

\begin{itemize}
    \item \textbf{Context Novelty:} Euclidean norm of the latent error vector $\| \mathbf{e}_t \|_2$ quantifies the novelty of the current generation context, technically aligned with the prediction error signal in RND. A larger norm indicates that the current context substantially differs from previously explored contexts, thereby automatically signaling stronger exploration.

    \item \textbf{Semantic Direction:} Cosine similarity between \(\mathbf{e}_t\) and \(w_z\) acts as a selective guide. 
    As \(\mathbf{e}_t\) encodes the representation component that the distiller fails to predict, 
    promoting tokens aligned with \(\mathbf{e}_t\) directs generation toward semantically distinct trajectories, 
    e.g., novel reasoning, rather than superficial lexical variations.
\end{itemize}

Consequently, the policy in Eq.~\ref{eq:sampling_rule} discourages redundant semantic responses rather than merely frequent tokens, biasing the generation toward unexplored semantic regions.

\subsection{Collaborative Exploration via Shared Online Training}
\label{sec:collaborative}

A critical advantage of our method emerges in parallel generation settings, where a batch of $K$ sequences is generated simultaneously. Since the \LD $f_\phi$ is updated online using hidden representations from \textit{all} sequences, it serves as a shared communication channel that coordinates exploration.

The mechanism functions as an implicit ``first-come, first-served'' scheduler. 
This is conceptually similar to centralized exploration signals used in multi-agent reinforcement learning to coordinate agents and avoid redundant coverage \citep{zhang2022centralizedmodelexplorationpolicy, jiang2024settlingdecentralizedmultiagentcoordinated, 10.1007/978-3-540-87608-3_6}.
When sequence A explores a latent pattern, the Distiller learns the corresponding mapping, causing a high match between $q_{\text{dist}}$ and $\pi_{\text{ref}}$ for that region. By Eq.~\eqref{eq:sampling_rule}, this suppresses the probability of subsequent sequences revisiting the same semantic mode.
This effectively \textit{suppresses} the likelihood of re-visiting semantic modes already claimed by other parallel workers, forcing the collective generation to diverge and cover the semantic space efficiently without explicit heuristic penalties. This coordination directly reflects the reward structure described in Section~\ref{sec:formulation}: once a trajectory explores a semantic region, the Distiller learns the corresponding mapping, causing the novelty reward for that region to vanish for all subsequent trajectories. Per-step suppression thus compounds into trajectory-level divergence.

\textbf{Algorithm Procedure.} The complete execution flow is summarized in Algorithm \ref{alg:exploration}. The process consists of two parallel streams: the \textit{Distiller Prediction Stream} and the \textit{Distiller Training Stream}.


At each time step $t$, we first compute the exploration-biased logits using the current Distiller, which are then used to guide subsequent generation. The Distiller is then trained by minimizing the MSE between its predicted hidden representations and the true representations, aggregated across the batch $B$:
\begin{equation}
    \mathcal{L}(\phi) = \frac{1}{|B|} \sum_{i \in B} \| h_{t,i}^L - f_\phi(h_{t,i}^1) \|_2^2
\end{equation}
By performing a gradient descent step on parameters $\phi$ at every token or mini-batch of tokens, the Distiller $f_{\phi}$ remains strictly synchronized with the evolving context of the current generation session.

\begin{algorithm}[t]
\caption{Exploration Sampling with Online Latent Distillation (Decode Step)}
\label{alg:exploration}
\begin{algorithmic}[1]
\REQUIRE LLM components ($W_{\text{embed}}, \text{Layers}, W_{\text{head}}$), Distiller $f_\phi$, Current batch inputs $s_1$
\REQUIRE Hyperparameters: Exploration intensity $\beta$, Learning rate $\eta$
\STATE Initialize $\phi$ randomly
\FOR{step $t = 1$ to $T$}
    \STATE $x_t \leftarrow \text{LastToken} (s_{t-1})$
    \STATE \textbf{--- Phase 1: Forward \& Logits Process ---}
    \STATE \textcolor{gray}{// Compute LLM 1st layer features}
    \STATE $H_t^1 \leftarrow \text{Layer}_1 (W_{\text{embed}}(x_t) \mid x_{<t})$ 
    
    \STATE \textcolor{blue}{// Distiller Prediction (Async Stream)}
    \STATE \textcolor{blue}{$\hat{H}_t^L \leftarrow f_\phi(H_t^1)$ }
    
    \STATE \textcolor{gray}{// Main LLM Execution (Main Stream)}
    \FOR{layer $l = 2$ to $L$}
        \STATE $H_t^l \leftarrow \text{Layer}_l(H_t^{l-1} \mid x_{<t})$  \COMMENT{Heavy Forward}
    \ENDFOR
    
    \STATE \textcolor{gray}{// Logits Projection \& Fusion}
    \STATE $\text{logits}_{\text{ref}}, \text{logits}_{\text{dist}} \leftarrow W_{\text{head}} (H_t^L, \hat{H}_t^L)$
    \STATE $\text{logits}_\text{ref} \leftarrow \text{Process}(\text{logits}_\text{ref})$ \COMMENT{e.g. TopK {\small(optional)}}
    \STATE $\text{logits}_{\text{new}} \leftarrow (1+\beta)\text{logits}_{\text{ref}} - \beta \text{logits}_{\text{dist}}$

    \STATE \textbf{--- Phase 2: Online Update ---}
    \STATE \textcolor{blue}{// Distiller Training (Async, GPU mainly)}
    \STATE \textcolor{blue}{$\mathcal{L} \leftarrow \text{MSE}(H_t^L, \hat{H}_t^L)$} \COMMENT{Batch-averaged loss}
    \STATE \textcolor{blue}{$\phi \leftarrow \phi - \eta \nabla_\phi \mathcal{L}$} \COMMENT{Distiller Update}
    
    \STATE \textcolor{gray}{// Sampling \& vLLM worker (Main, CPU mainly)}
    \STATE $z_t \sim \text{Softmax}(\text{logits}_{\text{new}})$ \COMMENT{Sample next token}
    \STATE $s_{t} \leftarrow s_{t-1} + z_t$ \COMMENT{Append next token}
    \STATE worker busy \COMMENT{vLLM engine process \& schedule}

\ENDFOR
\end{algorithmic}
\end{algorithm}

\begin{figure*}[htbp]
    \centering
    \includegraphics[width=\linewidth]{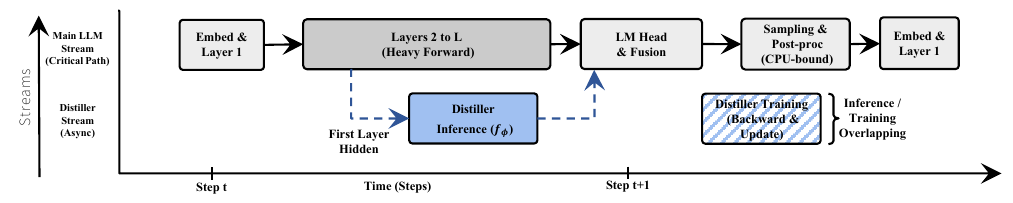}
    \caption{A diagram illustrating how the training and inference of the distiller overlap during LLM runtime. The distiller is able to overlap because it eliminates the temporal dependency on the LLM forward pass, thus allowing the distiller to run concurrently with the LLM forward pass.}
    \label{fig:gantt}
\end{figure*}

\subsection{Asynchronous Implementation}
\label{sec:implementation}

To minimize the computational overhead of the extra projection and backpropagation, we implement an asynchronous pipeline, as illustrated in Figure \ref{fig:gantt} and Algorithm \ref{alg:exploration}. The asynchronous algorithm portion highlighted in blue in Algorithm \ref{alg:exploration} can be overlapped by the subsequent portion in gray. 
We decouple the lightweight Distiller operations from the main generation backbone, executing them on a parallel computing stream.

The pipeline overlaps the Distiller's lifecycle with the host LLM's non-critical phases:

1. \textbf{Inference overlapping.} The Distiller's forward pass is triggered immediately after the LLM's first layer. While the host LLM executes middle transformer layers, which are often memory-bandwidth bound in practice, the lightweight Distiller computes $\hat{h}_t^L$ in parallel. In typical settings, the distillation logits are ready before the LLM reaches the final layer, enabling logit fusion without stalling.

2. \textbf{Hidden training.} To avoid blocking the generation of the current token, the Distiller’s training step—i.e., its backward pass and weight update—is deferred to the post-processing interval. This interval, which consists of CPU-bound tasks such as sampling, de-tokenization, and scheduling, often leaves the GPU underutilized, creating an opportunity window for low-impact updates.

By removing the Distiller from the critical path of token generation, our method incurs \emph{negligible} end-to-end overhead in scenarios where such slack exists. Under fully saturated GPU utilization, however, the benefits of overlap may diminish. Implementation details on CUDA stream synchronization and memory management are provided in Appendix~\ref{app:overlap}.

\section{Experiments}
\label{sec:experiments}

In this section, we evaluate the effectiveness of \ES across a diverse set of reasoning and creative writing tasks.
Our experiments aim to address the following research questions: (1) Does \ES effectively explore at test-time  to find valid solutions in complex reasoning tasks? (2) Does \ES promote semantic diversity rather than just surface-level lexical variation? (3) How do the online distillation and decoding dynamics evolve during generation? and (4) What is the computational overhead of the proposed asynchronous pipeline?

\subsection{Experimental Setup}

\paragraph{Benchmarks.} We evaluate performance across three distinct domains: 
\textbf{(1) Mathematics:} AIME 2024 and AIME 2025 \cite{aime24, aime25}, representing challenging competition-level math problems; 
\textbf{(2) Science:} GPQA-Diamond \cite{rein2024gpqa}, a collection of challenging multiple-choice questions in biology, physics, and chemistry. It includes a subset of 198 problems for which two independent domain experts reached consensus on the correct answer, while the majority of non-experts answered incorrectly; 
\textbf{(3) Code Generation:} LiveCodeBench v5 \cite{jain2024livecodebench}, a holistic benchmark evaluating coding capabilities with 167 problems collected from LeetCode, AtCoder, and Codeforces.
\textbf{(4) Creative Writing:} Following the protocol in Contrastive Decoding \cite{li-etal-2023-contrastive}, we use BookCorpus \cite{zhu2015aligning-bookcurpos} to evaluate story generation. We split stories in the middle, using the last 32 words of the first half as the prompt to generate 512-token continuations.
Given that mathematical reasoning tasks often require extensive intermediate steps, we set the maximum context length to 8192 for math benchmarks, while maintaining 4096 for all others. Further details are provided in Appendix \ref{app:exp_detail}.

\paragraph{Models.} To assess generalizability, we experiment with models of varying capabilities and architectures:
general-purpose instruction-tuned models (Qwen2.5-7B/32B-Instruct \cite{qwen2025qwen25technicalreport}), 
general-purpose reasoning models (Qwen3-8B \cite{yang2025qwen3}) and 
LLM from other model family (GPT-OSS-20B \cite{openai2025gptoss120bgptoss20bmodel}).
To prevent excessive reasoning from exhausting the context window, we configured Qwen3 to ``no\_thinking'' mode and GPT-OSS-20B to ``medium thinking effort'' mode.

\paragraph{Baselines.} We compare \ES against three categories of decoding methods:
\textbf{Stochastic Sampling:} We evaluate vanilla temperature sampling \cite{Holtzman2020The} (temperature=1), Min-$p$ \cite{minh2025turning} ($p_{base}$=0.1), and FIRE \cite{chen-etal-2025-flaming} which use high temperature at first token and low temperature afterwards.
\textbf{Structured Search:} We evaluate Tree of Thoughts \cite{yao2023tree} which explores multiple paths by iterative candidate generation and self-evaluation. 
\footnote{We also evaluated Stochastic Beam Search \cite{kool2019stochastic} and Diverse Beam Search \cite{Vijayakumar_Cogswell_Selvaraju_Sun_Lee_Crandall_Batra_2018}. Since beam search inherently lacks diversity in free-form generation, we defer these results to the Appendix \ref{app:beam}.}
\textbf{Logit-Level Control:} We evaluate Contrastive Decoding \cite{li-etal-2023-contrastive} which samples from the difference in logits between a large and small model, and OverRIDE \cite{anonymous2025diverse} which trains auxiliary heads at test time to suppress previously generated tokens.

\paragraph{Evaluation Metrics.}
To evaluate both the quality and diversity of the generated responses, we employ: 
(1) \textbf{Pass@$k$:} The probability of finding at least one correct solution among $k$ samples.
(2) \textbf{Embedding Similarity (Sim.):} The average pairwise cosine similarity of the generated response embeddings, measuring semantic closeness within generations.
(3) \textbf{Vendi Score \cite{friedman2023vendi}:} A spectral metric quantifying the effective number of unique semantic clusters in a batch.
(4) \textbf{Perplexity (PPL):} A proxy for linguistic coherence, calculated using Qwen2.5-1.5B-Instruct.

\paragraph{Implementation.} We implement \ES on top of vLLM \cite{kwon2023efficient}. The \LD consists of a 2-layer MLP trained online, with hidden size of 384 dimensions. Additional details are in Appendix \ref{app:exp_detail}.

\subsection{Main Results} 

\begin{figure*}[h]
    \centering
    \includegraphics[width=0.95\linewidth]{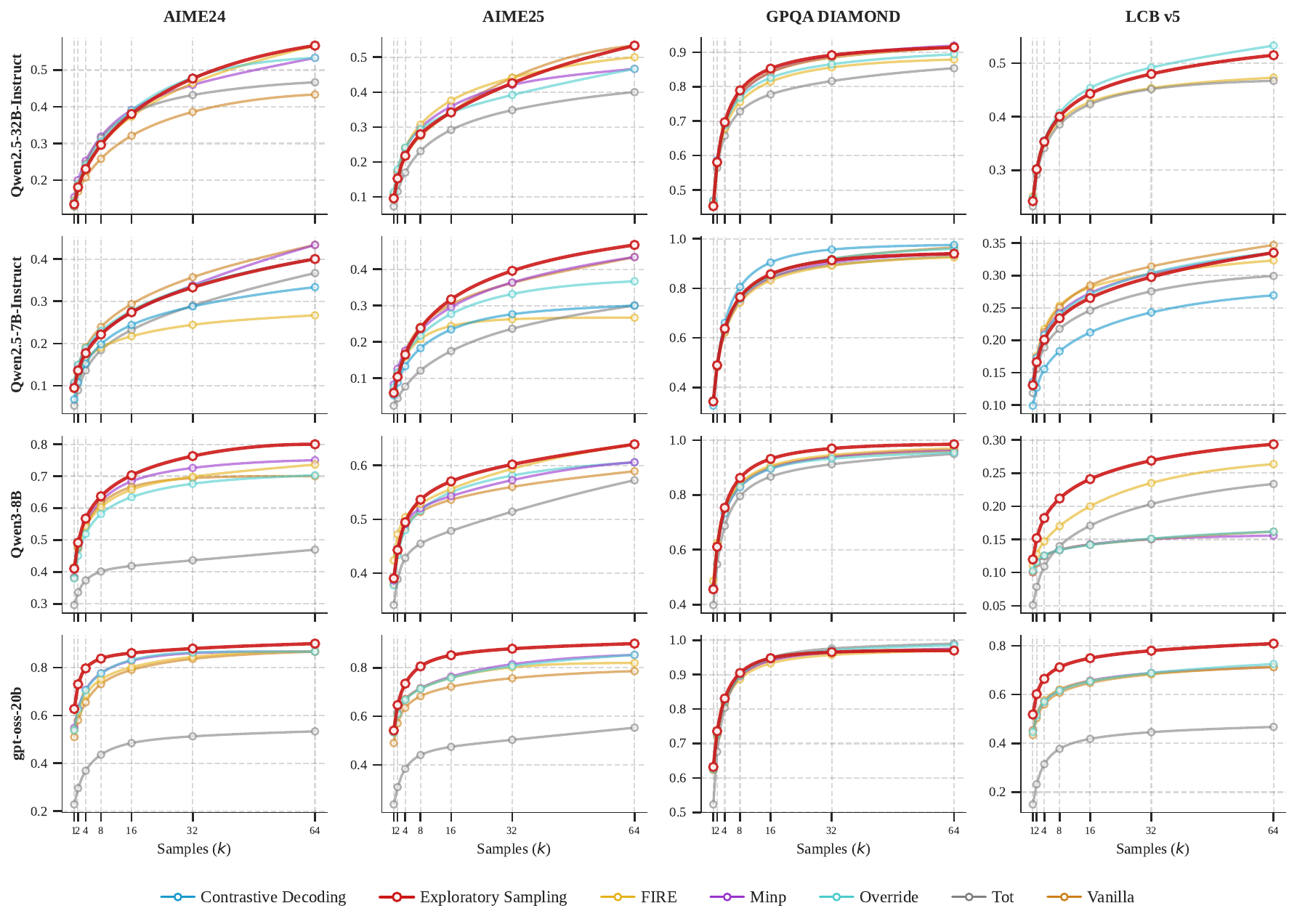}
    \caption{Pass@$k$ performance scaling across different models and benchmarks. \ES shows superior or comparable performance to all baselines.}
    \label{fig:pass_k_curves}
\end{figure*}

\paragraph{\ES Enables Effective Test-Time Exploration.}
Figure~\ref{fig:pass_k_curves} presents the Pass@$k$ performance scaling. While baseline methods designed for diversity often show promise at low $k$, they are frequently surpassed by vanilla sampling as $k$ increases, indicating that their exploration strategies may not scale effectively.

In contrast, \ES demonstrates superior or comparable performance to baselines, particularly excelling with reasoning models.
For instruction-tuned models, \ES consistently ranks among the top decoding strategies. Although certain methods exhibit benchmark-specific advantages, they often fail to generalize. For instance, FIRE extends the performance frontier on AIME24/25 but hampers capabilities on LiveCodeBench v5. In contrast, \ES shows robust generalization. Even with $\beta$ fixed at 0.25 to prevent hyperparameter cherry-picking, it remains effective in various tasks.

The impact of \ES is even more pronounced with reasoning models. It not only matches the high-$k$ performance of competing methods with a significantly lower sampling budget but also pushes the performance envelope on several benchmarks.
Notably, on GPT-OSS-20B, \ES achieves remarkable efficiency, attaining performance comparable to the Pass@64 of baseline methods with only Pass@8.
Overall, \ES yields the most significant gains in math tasks (AIME24/25) compared to QA or coding. We hypothesize that the open-ended nature of mathematical reasoning allows the model to better capitalize on the semantic exploration offered by \ES.

\paragraph{\ES Promotes Semantic Diversity.}
We analyze the trade-off between generation quality and diversity in Table~\ref{tab:diversity_results}. Using Qwen2.5-7B-Instruct, we generate 16 sequences in parallel for each prompt.

Standard decoding strategies often face a conflict between coherence and diversity. As shown in the Creative Writing results, methods like Min-P improve generation quality but restrict diversity, evidenced by lower Vendi scores and higher pairwise similarity compared to vanilla sampling. This suggests that while the outputs are coherent, they remain semantically repetitive.

\ES breaks this trade-off. It achieves the highest diversity and the lowest semantic similarity while simultaneously maintaining the best generation quality. This indicates that the exploration induced by \ES yields candidates that are not only semantically distinct but also linguistically high-quality.
In the \textbf{Math Reasoning} domain, \ES similarly achieves the highest diversity scores alongside superior Pass@$k$. This confirms that our method successfully explores distinct valid reasoning paths, rather than simply generating diverse but incorrect hallucinations.

\begin{table}[h]
\centering
\caption{Diversity and Quality Evaluation on Creative Writing and Math Reasoning (AIME25). \ES consistently improves semantic diversity. Sim.: Cosine Similarity; PPL: Perplexity.}
\label{tab:diversity_results}
\resizebox{\columnwidth}{!}{
\begin{tabular}{lccccc}
\toprule
\multirow{2}{*}{\textbf{Method}} & \multicolumn{3}{c}{\textbf{Creative Writing}} & \multicolumn{2}{c}{\textbf{Math (AIME)}} \\
\cmidrule(lr){2-4} \cmidrule(lr){5-6}
 & Vendi $\uparrow$ & Sim. $\downarrow$ & PPL $\downarrow$ & Pass@16 $\uparrow$ & Vendi $\uparrow$ \\
\midrule 
Vanilla & 1.62 & 0.58 & 4.08 & 30.3\% & 0.32 \\
Min-P   & 1.56 & 0.62 & 3.93 & 29.5\% & 0.30 \\
OverRIDE & 1.61 & 0.59 & 4.11 & 27.7\% & 0.35 \\
\textbf{\ES (Ours)} & \textbf{1.67} & \textbf{0.57} & \textbf{3.55} & \textbf{31.7\%} & \textbf{0.46} \\
\bottomrule
\end{tabular}
}
\end{table}

\begin{table*}[t]
\centering
\caption{Sensitivity analysis of \ES on AIME25 (Qwen2.5-7B-Instruct). We report Pass@$k$ ($\%$) for $k \in \{1, 2, 4, 8, 16, 32, 64\}$. The default setting ($\beta=0.25$, proposed fusion) is highlighted in \textbf{bold}.}
\label{tab:ablation_study}
\small
\resizebox{\textwidth}{!}{
\begin{tabular}{llccccccc}
\toprule
\textbf{Ablation Category} & \textbf{Setting} & \textbf{Pass@1} & \textbf{Pass@2} & \textbf{Pass@4} & \textbf{Pass@8} & \textbf{Pass@16} & \textbf{Pass@32} & \textbf{Pass@64} \\
\midrule
\multirow{4}{*}{Exploration Strength $\beta$} 
 & $\beta=0.1$ & 6.0\% & 10.4\% & 16.0\% & 21.5\% & 26.7\% & 32.3\% & 40.0\% \\
 & \textbf{$\beta=0.25$ (Default)} & 6.0\% & 10.4\% & 16.5\% & 23.8\% & 31.7\% & 39.5\% & 46.7\% \\
 & $\beta=0.5$ & 4.8\% & 8.5\% & 13.4\% & 18.8\% & 23.8\% & 28.1\% & 30.0\% \\
\midrule
\multirow{2}{*}{Fusion Formulation} 
 & Subtraction $(1-\beta), \beta=0.5$ & 0.2\% & 0.4\% & 0.8\% & 1.7\% & 3.3\% & 6.7\% & 13.3\% \\
& Subtraction $(1-\beta),\beta=0.2$ & 5.8\% & 10.2\% & 14.8\% & 19.7\% & 24.6\% & 28.1\% & 33.3\% \\
 & Subtraction $(1-\beta),\beta=0.1$ & 6.7\% & 11.2\% & 15.6\% & 21.3\% & 29.5\% & 35.1\% & 40.0\% \\
 & \textbf{Proposed $(1+\beta), \beta=0.25$} & 6.0\% & 10.4\% & 16.5\% & 23.8\% & 31.7\% & 39.5\% & 46.7\% \\
\bottomrule
\end{tabular}
}
\end{table*}

\subsection{Decoding and Latent Distillation Dynamics}

To better understand the internal behavior of \ES, we analyze the generation dynamics of parallel workers and the training dynamics of the \LD.

\paragraph{Generation Trajectory Divergence.}
Figure~\ref{fig:dynamics}(a) tracks the average pairwise cosine similarity between $k$ parallel generations over decoding steps (Qwen2.5-7B on BookCorpus).
Initially, all methods exhibit a rapid drop in similarity, indicating that parallel sequences quickly diverge from the common prompt. 
However, baseline methods soon enter a plateau phase where the decline decelerates significantly, suggesting that the diversification of candidates stagnates as the sequences lengthen.
In contrast, \ES maintains a continuous downward trend throughout the generation process. 
This sustained divergence arises because the shared Distiller acts as a centralized coordinator: once a semantic path is traversed by one sequence, it becomes ``predictable'' to the Distiller, naturally steering other parallel sequences toward unexplored semantic regions.

\begin{figure}[h]
    \centering
    \includegraphics[width=\linewidth]{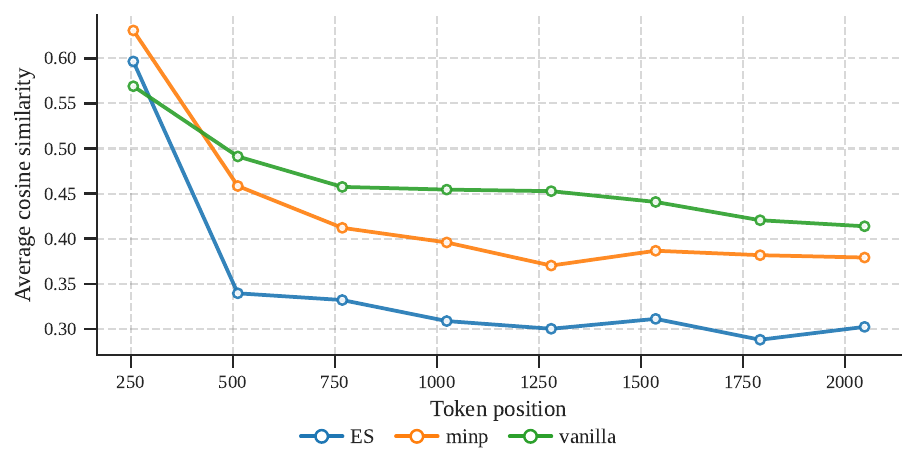}
    \caption{Decoding and Distillation Dynamics. \ES encourages parallel generations to diverge semantically over time. }
    \label{fig:dynamics}
\end{figure}

\subsection{Efficiency Analysis}

A major barrier for test-time scaling methods is the additional computational and memory overhead. We evaluate the throughput (tokens/sec) on a single RTX4090 GPU under three scenarios: (1) \textbf{Single-User} ($B=1, K=1$); (2) \textbf{High-Throughput} ($B=32, K=1$); and (3) \textbf{Test-Time Scaling} ($B=32, K=16$).

As shown in Table~\ref{tab:efficiency}, the asynchronous implementation of \ES introduces negligible overhead. 

\textbf{Analysis of Serving Scenarios. }
As shown in Table~\ref{tab:efficiency}, the asynchronous implementation of \ES incurs negligible overhead.
In standard serving scenarios ($K=1$), the overhead is less than 2\%, as the Distiller's computation is fully overlapped with the LLM's execution.
Crucially, in the \textbf{Test-Time Scaling scenario ($K=16$)}, the overhead rises only marginally to $\approx 4.25\%$. Unlike methods for coordinating parallel exploration with complex sequential dependencies (e.g., Tree of Thoughts), \ES utilizes a shared training signal that coordinates exploration without requiring expensive cross-worker communication. This makes \ES a highly practical solution for deployments.

\begin{table}[h]
\centering
\caption{Efficiency comparison on an RTX4090 GPU (Qwen3-8B). We compare throughput across different Prompt Batches ($B$) and Samples per Request ($K$). \ES maintains high efficiency even under massive parallel scaling.
}
\label{tab:efficiency}
\small
\begin{tabular}{cccc}
\toprule
\textbf{Scenario ($B \times K$)} & \textbf{Vanilla} & \textbf{\ES} & \textbf{Overhead (\%)} \\
\midrule
$B=1, K=1$ & 55.1 & 54.9 & 0.3\% \\
$B=32, K=1$ & 1215.2 & 1193.2 & 1.81\% \\
$B=32, K=16$ & 4557.7 & 4364.0 & 4.25\% \\
\bottomrule
\end{tabular}
\end{table}

\textbf{Memory Footprint. }
The memory overhead is minimal. The \LD and its hidden state buffer consume less than 200MB of VRAM for an 8B model. With optimization, this can theoretically be reduced to around 50MB, as the distiller hidden states are low-dimensional.

\subsection{Sensitivity Analysis}

\textbf{Influence of Exploration Strength $\beta$. }
We ablate the hyperparameter $\beta$, which controls the intensity of exploration. For Qwen2.5-7B-Instruct on AIME25, setting $\beta = 0.25$ yields the best balance. $\beta$ being too low causes the method to degenerate to Vanilla sampling, while values too high degrade performance by excessively penalizing high-confidence tokens.

\textbf{Logit Fusion Formulation. }
We compared our fusion formula $\text{logit}_{\text{new}} = (1+\beta)\text{logit}_{\text{ref}} - \beta \text{logit}_{\text{dist}}$ with a naive subtraction $\text{logit}_{\text{ref}} - \beta \text{logit}_{\text{dist}}$. The empirical results show that the $(1+\beta)$ formulation preserves the relative probability mass of the base model better, preventing the model from generating grammatically incorrect sequences while still encouraging exploration.

\section{Conclusion}
We introduce Exploratory Sampling (\ES) to address the limitations of standard decoding, which often yields surface-level lexical variations without genuine semantic diversity. By estimating novelty in internal representations, \ES steers generation toward under-explored semantic regions. Empirical results demonstrate that \ES matches or outperforms baselines, while our asynchronous pipeline ensures negligible latency overhead, establishing \ES as a practical solution for efficient LLM exploration.

\section*{Acknowledgements}

The research received support from National Natural Science Foundation of China (Grant No. 62406193). We are also supported by the National Natural Science Foundation of China (No. 62506041).

The authors gratefully acknowledge further assistance provided by the Shanghai Frontiers Science Center of Human-centered Artificial Intelligence, the MoE Key Lab of Intelligent Perception and Human-Machine Collaboration, the ShanghaiTech GenAI Platform, and the HPC Platform of ShanghaiTech University.

\section*{Impact Statement}
This paper presents work whose goal is to advance the field of Machine
Learning. There are many potential societal consequences of our work, none
which we feel must be specifically highlighted here.

\newpage

\bibliography{example_paper}

@inproceedings{
minh2025turning,
title={Turning Up the Heat: Min-p Sampling for Creative and Coherent {LLM} Outputs},
author={Nguyen Nhat Minh and Andrew Baker and Clement Neo and Allen G Roush and Andreas Kirsch and Ravid Shwartz-Ziv},
booktitle={The Thirteenth International Conference on Learning Representations},
year={2025},
url={https://openreview.net/forum?id=FBkpCyujtS}
}

@inproceedings{
yao2023tree,
title={Tree of Thoughts: Deliberate Problem Solving with Large Language Models},
author={Shunyu Yao and Dian Yu and Jeffrey Zhao and Izhak Shafran and Thomas L. Griffiths and Yuan Cao and Karthik R Narasimhan},
booktitle={Thirty-seventh Conference on Neural Information Processing Systems},
year={2023},
url={https://openreview.net/forum?id=5Xc1ecxO1h}
}

@article{zhang2024edtimprovinglargelanguage,
  title={EDT: Improving Large Language Models' Generation by Entropy-based Dynamic Temperature Sampling},
  author={Zhang, Shimao and Bao, Yu and Huang, Shujian},
  journal={CoRR},
  year={2024}
}

@inproceedings{chen-etal-2025-flaming,
    title = "Flaming-hot Initiation with Regular Execution Sampling for Large Language Models",
    author = "Chen, Weizhe  and
      Zhang, Zhicheng  and
      Liu, Guanlin  and
      Zheng, Renjie  and
      Shi, Wenlei  and
      Dun, Chen  and
      Wu, Zheng  and
      Jin, Xing  and
      Yan, Lin",
    editor = "Chiruzzo, Luis  and
      Ritter, Alan  and
      Wang, Lu",
    booktitle = "Findings of the Association for Computational Linguistics: NAACL 2025",
    month = apr,
    year = "2025",
    address = "Albuquerque, New Mexico",
    publisher = "Association for Computational Linguistics",
    url = "https://aclanthology.org/2025.findings-naacl.396/",
    doi = "10.18653/v1/2025.findings-naacl.396",
    pages = "7118--7127",
    ISBN = "979-8-89176-195-7"
}

@misc{cobbe2021trainingverifierssolvemath,
      title={Training Verifiers to Solve Math Word Problems}, 
      author={Karl Cobbe and Vineet Kosaraju and Mohammad Bavarian and Mark Chen and Heewoo Jun and Lukasz Kaiser and Matthias Plappert and Jerry Tworek and Jacob Hilton and Reiichiro Nakano and Christopher Hesse and John Schulman},
      year={2021},
      eprint={2110.14168},
      archivePrefix={arXiv},
      primaryClass={cs.LG},
      url={https://arxiv.org/abs/2110.14168}, 
}

@inproceedings{
wang2023selfconsistency,
title={Self-Consistency Improves Chain of Thought Reasoning in Language Models},
author={Xuezhi Wang and Jason Wei and Dale Schuurmans and Quoc V Le and Ed H. Chi and Sharan Narang and Aakanksha Chowdhery and Denny Zhou},
booktitle={The Eleventh International Conference on Learning Representations },
year={2023},
url={https://openreview.net/forum?id=1PL1NIMMrw}
}

@inproceedings{
Holtzman2020The,
title={The Curious Case of Neural Text Degeneration},
author={Ari Holtzman and Jan Buys and Li Du and Maxwell Forbes and Yejin Choi},
booktitle={International Conference on Learning Representations},
year={2020},
url={https://openreview.net/forum?id=rygGQyrFvH}
}

@inproceedings{10.5555/3692070.3693554,
author = {Mudgal, Sidharth and Lee, Jong and Ganapathy, Harish and Li, YaGuang and Wang, Tao and Huang, Yanping and Chen, Zhifeng and Cheng, Heng-Tze and Collins, Michael and Strohman, Trevor and Chen, Jilin and Beutel, Alex and Beirami, Ahmad},
title = {Controlled decoding from language models},
year = {2024},
publisher = {JMLR.org},
booktitle = {Proceedings of the 41st International Conference on Machine Learning},
articleno = {1484},
numpages = {18},
location = {Vienna, Austria},
series = {ICML'24}
}

@inproceedings{li-etal-2023-contrastive,
    title = "Contrastive Decoding: Open-ended Text Generation as Optimization",
    author = "Li, Xiang Lisa  and
      Holtzman, Ari  and
      Fried, Daniel  and
      Liang, Percy  and
      Eisner, Jason  and
      Hashimoto, Tatsunori  and
      Zettlemoyer, Luke  and
      Lewis, Mike",
    editor = "Rogers, Anna  and
      Boyd-Graber, Jordan  and
      Okazaki, Naoaki",
    booktitle = "Proceedings of the 61st Annual Meeting of the Association for Computational Linguistics (Volume 1: Long Papers)",
    month = jul,
    year = "2023",
    address = "Toronto, Canada",
    publisher = "Association for Computational Linguistics",
    url = "https://aclanthology.org/2023.acl-long.687/",
    doi = "10.18653/v1/2023.acl-long.687",
    pages = "12286--12312"
}

@inproceedings{
anonymous2025diverse,
title={Diverse Text Decoding via Iterative Reweighting},
author={Ruiqi Shi and Sinno Jialin Pan},
booktitle={The Fourteenth International Conference on Learning Representations},
year={2026},
url={https://openreview.net/forum?id=2Xd5RlypLx}
}

@inproceedings{
burda2018exploration,
title={Exploration by random network distillation},
author={Yuri Burda and Harrison Edwards and Amos Storkey and Oleg Klimov},
booktitle={International Conference on Learning Representations},
year={2019},
url={https://openreview.net/forum?id=H1lJJnR5Ym},
}

@article{Vijayakumar_Cogswell_Selvaraju_Sun_Lee_Crandall_Batra_2018, title={Diverse Beam Search for Improved Description of Complex Scenes}, volume={32}, url={https://ojs.aaai.org/index.php/AAAI/article/view/12340}, DOI={10.1609/aaai.v32i1.12340}, abstractNote={ &lt;p&gt; A single image captures the appearance and position of multiple entities in a scene as well as their complex interactions. As a consequence, natural language grounded in visual contexts tends to be diverse---with utterances differing as focus shifts to specific objects, interactions, or levels of detail. Recently, neural sequence models such as RNNs and LSTMs have been employed to produce visually-grounded language. Beam Search, the standard work-horse for decoding sequences from these models, is an approximate inference algorithm that decodes the top-B sequences in a greedy left-to-right fashion. In practice, the resulting sequences are often minor rewordings of a common utterance, failing to capture the multimodal nature of source images. To address this shortcoming, we propose Diverse Beam Search (DBS), a diversity promoting alternative to BS for approximate inference. DBS produces sequences that are significantly different from each other by incorporating diversity constraints within groups of candidate sequences during decoding; moreover, it achieves this with minimal computational or memory overhead. We demonstrate that our method improves both diversity and quality of decoded sequences over existing techniques on two visually-grounded language generation tasks---image captioning and visual question generation---particularly on complex scenes containing diverse visual content. We also show similar improvements at language-only machine translation tasks, highlighting the generality of our approach. &lt;/p&gt; }, number={1}, journal={Proceedings of the AAAI Conference on Artificial Intelligence}, author={Vijayakumar, Ashwin and Cogswell, Michael and Selvaraju, Ramprasaath and Sun, Qing and Lee, Stefan and Crandall, David and Batra, Dhruv}, year={2018}, month={Apr.} }

@inproceedings{kool2019stochastic,
  title={Stochastic Beams and Where to Find Them: The Gumbel-Top-k Trick for Sampling Sequences Without Replacement},
  author={Wouter Kool and Herke van Hoof and Max Welling},
  booktitle={International Conference on Machine Learning},
  year={2019}
}

@inproceedings{korbak-etal-2022-rl,
    title = "{RL} with {KL} penalties is better viewed as {B}ayesian inference",
    author = "Korbak, Tomasz  and
      Perez, Ethan  and
      Buckley, Christopher",
    editor = "Goldberg, Yoav  and
      Kozareva, Zornitsa  and
      Zhang, Yue",
    booktitle = "Findings of the Association for Computational Linguistics: EMNLP 2022",
    month = dec,
    year = "2022",
    address = "Abu Dhabi, United Arab Emirates",
    publisher = "Association for Computational Linguistics",
    url = "https://aclanthology.org/2022.findings-emnlp.77/",
    doi = "10.18653/v1/2022.findings-emnlp.77",
    pages = "1083--1091"
}

@inproceedings{zhang2022centralizedmodelexplorationpolicy,
  title={Centralized Model and Exploration Policy for Multi-Agent RL},
  author={Zhang, Qizhen and Lu, Chris and Garg, Animesh and Foerster, Jakob},
  booktitle={Proceedings of the 21st International Conference on Autonomous Agents and Multiagent Systems},
  pages={1500--1508},
  year={2022}
}

@article{jiang2024settlingdecentralizedmultiagentcoordinated, title={Settling Decentralized Multi-Agent Coordinated Exploration by Novelty Sharing}, volume={38}, url={https://ojs.aaai.org/index.php/AAAI/article/view/29693}, DOI={10.1609/aaai.v38i16.29693}, abstractNote={Exploration in decentralized cooperative multi-agent reinforcement learning faces two challenges. One is that the novelty of global states is unavailable, while the novelty of local observations is biased. The other is how agents can explore in a coordinated way. To address these challenges, we propose MACE, a simple yet effective multi-agent coordinated exploration method. By communicating only local novelty, agents can take into account other agents’ local novelty to approximate the global novelty. Further, we newly introduce weighted mutual information to measure the influence of one agent’s action on other agents’ accumulated novelty. We convert it as an intrinsic reward in hindsight to encourage agents to exert more influence on other agents’ exploration and boost coordinated exploration. Empirically, we show that MACE achieves superior performance in three multi-agent environments with sparse rewards.}, number={16}, journal={Proceedings of the AAAI Conference on Artificial Intelligence}, author={Jiang, Haobin and Ding, Ziluo and Lu, Zongqing}, year={2024}, month={Mar.}, pages={17444-17452} }

@inproceedings{10.1007/978-3-540-87608-3_6,
author = {Chaslot, Guillaume M. and Winands, Mark H. and Herik, H. Jaap},
title = {Parallel Monte-Carlo Tree Search},
year = {2008},
isbn = {9783540876076},
publisher = {Springer-Verlag},
address = {Berlin, Heidelberg},
url = {https://doi.org/10.1007/978-3-540-87608-3_6},
doi = {10.1007/978-3-540-87608-3_6},
booktitle = {Proceedings of the 6th International Conference on Computers and Games},
pages = {60–71},
numpages = {12},
location = {Beijing, China},
series = {CG '08}
}

@inproceedings{Liu2024decoding,
 title = {Decoding-time Realignment of Language Models},
 author={Liu, Tianlin and Guo, Shangmin and Bianco, Leonardo and Calandriello, Daniele and Berthet, Quentin and Llinares, Felipe and Hoffmann, Jessica and Dixon, Lucas and Valko, Michal and Blondel, Mathieu},
 booktitle = {Proceedings of the International Conference on Machine Learning},
 year = {2024}
}

@article{ACKLEY1985147,
  title={A learning algorithm for Boltzmann machines},
  author={Ackley, David H and Hinton, Geoffrey E and Sejnowski, Terrence J},
  journal={Cognitive science},
  volume={9},
  number={1},
  pages={147--169},
  year={1985},
  publisher={Elsevier}
}

@inproceedings{kwon2023efficient,
  title={Efficient Memory Management for Large Language Model Serving with PagedAttention},
  author={Woosuk Kwon and Zhuohan Li and Siyuan Zhuang and Ying Sheng and Lianmin Zheng and Cody Hao Yu and Joseph E. Gonzalez and Hao Zhang and Ion Stoica},
  booktitle={Proceedings of the ACM SIGOPS 29th Symposium on Operating Systems Principles},
  year={2023}
}

@misc{aime24,
      title={American Invitational Mathematics Examination (AIME) 2024}, 
      author={Zhang, Yifan and Math-AI, Team},
      year={2024},
}

@misc{aime25,
      title={American Invitational Mathematics Examination (AIME) 2025}, 
      author={Zhang, Yifan and Math-AI, Team},
      year={2025},
}

@inproceedings{rein2024gpqa,
  title={Gpqa: A graduate-level google-proof q\&a benchmark},
  author={Rein, David and Hou, Betty Li and Stickland, Asa Cooper and Petty, Jackson and Pang, Richard Yuanzhe and Dirani, Julien and Michael, Julian and Bowman, Samuel R},
  booktitle={First Conference on Language Modeling},
  year={2024}
}

@article{jain2024livecodebench,
  title={Livecodebench: Holistic and contamination free evaluation of large language models for code},
  author={Jain, Naman and Han, King and Gu, Alex and Li, Wen-Ding and Yan, Fanjia and Zhang, Tianjun and Wang, Sida and Solar-Lezama, Armando and Sen, Koushik and Stoica, Ion},
  journal={arXiv preprint arXiv:2403.07974},
  year={2024}
}

@inproceedings{zhu2015aligning-bookcurpos,
  title={Aligning books and movies: Towards story-like visual explanations by watching movies and reading books},
  author={Zhu, Yukun and Kiros, Ryan and Zemel, Rich and Salakhutdinov, Ruslan and Urtasun, Raquel and Torralba, Antonio and Fidler, Sanja},
  booktitle={Proceedings of the IEEE international conference on computer vision},
  pages={19--27},
  year={2015}
}

@misc{qwen2025qwen25technicalreport,
      title={Qwen2.5 Technical Report}, 
      author={Qwen and : and An Yang and Baosong Yang and Beichen Zhang and Binyuan Hui and Bo Zheng and Bowen Yu and Chengyuan Li and Dayiheng Liu and Fei Huang and Haoran Wei and Huan Lin and Jian Yang and Jianhong Tu and Jianwei Zhang and Jianxin Yang and Jiaxi Yang and Jingren Zhou and Junyang Lin and Kai Dang and Keming Lu and Keqin Bao and Kexin Yang and Le Yu and Mei Li and Mingfeng Xue and Pei Zhang and Qin Zhu and Rui Men and Runji Lin and Tianhao Li and Tianyi Tang and Tingyu Xia and Xingzhang Ren and Xuancheng Ren and Yang Fan and Yang Su and Yichang Zhang and Yu Wan and Yuqiong Liu and Zeyu Cui and Zhenru Zhang and Zihan Qiu},
      year={2025},
      eprint={2412.15115},
      archivePrefix={arXiv},
      primaryClass={cs.CL},
      url={https://arxiv.org/abs/2412.15115}, 
}

@article{yang2025qwen3,
  title={Qwen3 technical report},
  author={Yang, An and Li, Anfeng and Yang, Baosong and Zhang, Beichen and Hui, Binyuan and Zheng, Bo and Yu, Bowen and Gao, Chang and Huang, Chengen and Lv, Chenxu and others},
  journal={arXiv preprint arXiv:2505.09388},
  year={2025}
}

@misc{openai2025gptoss120bgptoss20bmodel,
      title={gpt-oss-120b \& gpt-oss-20b Model Card}, 
      author={OpenAI},
      year={2025},
      eprint={2508.10925},
      archivePrefix={arXiv},
      primaryClass={cs.CL},
      url={https://arxiv.org/abs/2508.10925}, 
}

@article{wang2025diversity,
  title={Diversity-enhanced reasoning for subjective questions},
  author={Wang, Yumeng and Fan, Zhiyuan and Liu, Jiayu and Huang, Jen-tse and Fung, Yi R},
  journal={arXiv preprint arXiv:2507.20187},
  year={2025}
}

@article{dorner2025roc,
  title={ROC-n-reroll: How verifier imperfection affects test-time scaling},
  author={Dorner, Florian E and Chen, Yatong and Cruz, Andr{\'e} F and Yang, Fanny},
  journal={arXiv preprint arXiv:2507.12399},
  year={2025}
}

@inproceedings{
chenprovable,
title={Provable Scaling Laws for the Test-Time Compute of Large Language Models},
author={Yanxi Chen and Xuchen Pan and Yaliang Li and Bolin Ding and Jingren Zhou},
booktitle={The Thirty-ninth Annual Conference on Neural Information Processing Systems},
year={2025},
url={https://openreview.net/forum?id=GBMzJLhsRj}
}

@article{peters2010relative, title={Relative Entropy Policy Search}, volume={24}, url={https://ojs.aaai.org/index.php/AAAI/article/view/7727}, DOI={10.1609/aaai.v24i1.7727}, abstractNote={ &lt;p&gt; Policy search is a successful approach to reinforcement learning. However, policy improvements often result in the loss of information. Hence, it has been marred by premature convergence and implausible solutions. As first suggested in the context of covariant policy gradients, many of these problems may be addressed by constraining the information loss. In this paper, we continue this path of reasoning and suggest the Relative Entropy Policy Search (REPS) method. The resulting method differs significantly from previous policy gradient approaches and yields an exact update step. It can be shown to work well on typical reinforcement learning benchmark problems. &lt;/p&gt; }, number={1}, journal={Proceedings of the AAAI Conference on Artificial Intelligence}, author={Peters, Jan and Mulling, Katharina and Altun, Yasemin}, year={2010}, month={Jul.}, pages={1607-1612} }

@article{friedman2023vendi,
  title={The Vendi Score: A Diversity Evaluation Metric for Machine Learning},
  author={Friedman, Dan and Dieng, Adji Bousso},
  journal={Transactions on Machine Learning Research},
  issn={2835-8856},
  year={2023}
}

@misc{lighteval,
  author = {Habib, Nathan and Fourrier, Clémentine and Kydlíček, Hynek and Wolf, Thomas and Tunstall, Lewis},
  title = {LightEval: A lightweight framework for LLM evaluation},
  year = {2023},
  version = {0.11.0},
  url = {https://github.com/huggingface/lighteval}
}

@article{snell2024scaling,
  title={Scaling llm test-time compute optimally can be more effective than scaling model parameters},
  author={Snell, Charlie and Lee, Jaehoon and Xu, Kelvin and Kumar, Aviral},
  journal={arXiv preprint arXiv:2408.03314},
  year={2024}
}

@article{chen2021evaluating,
  title={Evaluating large language models trained on code},
  author={Chen, Mark},
  journal={arXiv preprint arXiv:2107.03374},
  year={2021}
}

@inproceedings{weng2023large,
  title={Large language models are better reasoners with self-verification},
  author={Weng, Yixuan and Zhu, Minjun and Xia, Fei and Li, Bin and He, Shizhu and Liu, Shengping and Sun, Bin and Liu, Kang and Zhao, Jun},
  booktitle={Findings of the Association for Computational Linguistics: EMNLP 2023},
  pages={2550--2575},
  year={2023}
}
\bibliographystyle{icml2026}

\newpage
\appendix
\onecolumn

\section{Derivations}
\label{app:rl_proof}

\subsection{Closed-Form Optimal Policy}
\label{app:closed_form}

We provide the derivation for the closed-form solution of the KL-regularized reinforcement learning objective referenced in Section \ref{sec:sampling}.

\textbf{Problem Formulation:}
We seek a policy $\pi(z|s)$ that maximizes the expected reward $r(s, z)$ while staying close to a reference policy $\pi_{\text{ref}}(z|s)$ to ensure coherence. The objective function is:
\begin{equation}
    J(\pi) = \mathbb{E}_{z \sim \pi(\cdot|s)} [r(s, z)] - \alpha D_{\text{KL}}(\pi(\cdot|s) || \pi_{\text{ref}}(\cdot|s))
\end{equation}
where $\alpha > 0$ is the regularization coefficient (temperature). Expanding the KL divergence:
\begin{equation}
    J(\pi) = \sum_{z} \pi(z|s) r(s, z) - \alpha \sum_{z} \pi(z|s) \log \frac{\pi(z|s)}{\pi_{\text{ref}}(z|s)}
\end{equation}
We introduce a Lagrange multiplier $\lambda$ for the constraint that probabilities must sum to 1 ($\sum \pi(z|s) = 1$). The Lagrangian is:
\begin{equation}
\begin{split}
\mathcal{L}(\pi, \lambda) ={}& \sum_{z} \pi(z|s) \left( r(s, z) - \alpha \log \pi(z|s) + \alpha \log \pi_{\text{ref}}(z|s) \right) \\
& - \lambda \left( \sum_{z} \pi(z|s) - 1 \right)
\end{split}
\end{equation}

\textbf{Optimization:}
Taking the derivative with respect to $\pi(z|s)$ and setting it to zero:
\begin{equation}
    \frac{\partial \mathcal{L}}{\partial \pi(z|s)} = r(s, z) - \alpha \left( \log \pi(z|s) + 1 \right) + \alpha \log \pi_{\text{ref}}(z|s) - \lambda = 0
\end{equation}
Rearranging terms to solve for $\log \pi(z|s)$:
\begin{align}
    \alpha \log \pi(z|s) &= r(s, z) + \alpha \log \pi_{\text{ref}}(z|s) - \lambda - \alpha \\
    \log \pi(z|s) &= \log \pi_{\text{ref}}(z|s) + \frac{1}{\alpha} r(s, z) - \frac{\lambda + \alpha}{\alpha}
\end{align}
Exponentiating both sides:
\begin{equation}
    \pi^*(z|s) = \pi_{\text{ref}}(z|s) \exp\left( \frac{r(s, z)}{\alpha} \right) \cdot \exp\left( - \frac{\lambda + \alpha}{\alpha} \right)
\end{equation}
The term $\exp( - \frac{\lambda + \alpha}{\alpha} )$ acts as a normalization constant (partition function $Z(s)$) to ensure $\sum \pi^*(z|s) = 1$. Thus:
\begin{equation}
    \pi^*(z|s) \propto \pi_{\text{ref}}(z|s) \exp\left( \frac{r(s, z)}{\alpha} \right)
\end{equation}
\qed

\label{app:derivation}

\subsection{Reward Structure and Per-Step Optimality}
\label{app:reward_structure}

In Section~\ref{sec:formulation}, we optimize a per-step reward $r(s_t, z_t)$ in place of the sequential $Q$-function. Here we formalize the structural property under which this substitution is exact, and then discuss how the Latent Distiller approximates this ideal in practice.

\paragraph{Ideal reward structure.}
The following property captures the intended design of the novelty reward in our batch-parallel generation setting.

\begin{definition}
\label{def:reward_structure}
A novelty reward $\{r_t\}_{t \geq 1}$ satisfies \emph{vanishing redundancy} if, whenever the current prefix resides in a semantic region already explored by the batch, all continuations remain in explored regions:
\begin{equation}
  r_t(s_t, z_t) = 0 \;\;\Longrightarrow\;\; r_{t'}(s_{t'}, z_{t'}) = 0, \quad \forall\, t' > t,\; \forall\, z_{t'} \in \mathcal{V}.
  \label{eq:vanishing_def}
\end{equation}
\end{definition}

\begin{proposition}
\label{prop:q_equals_r}
If the novelty reward satisfies Definition~\ref{def:reward_structure}, then the optimal $Q$-function of the sequential KL-regularized problem equals the per-step reward: $Q^*(s_t, z_t) = r_t(s_t, z_t)$.
\end{proposition}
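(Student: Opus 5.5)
The plan is to write the sequential KL-regularized problem in its soft Bellman form and show that the continuation value vanishes. Over a finite horizon $T$, the optimal action-value and value functions satisfy
\begin{equation*}
  Q^*(s_t,z_t) = r_t(s_t,z_t) + V^*(s_{t+1}), \qquad
  V^*(s) = \alpha \log \sum_{z\in\mathcal{V}} \pi_{\mathrm{ref}}(z\mid s)\, \exp\!\Bigl(\tfrac{1}{\alpha} Q^*(s,z)\Bigr),
\end{equation*}
with $V^*(s_{T+1}) = 0$. The log-sum-exp form of $V^*$ is exactly the optimal value of the single-step objective solved in Appendix~\ref{app:closed_form}: plugging $\pi^*\propto\pi_{\mathrm{ref}}\exp(Q^*/\alpha)$ back into the objective gives $\alpha\log Z(s)$. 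So I would take that derivation as given and only add the deterministic transition $s_{t+1}=(s_t,z_t)$. The statement $Q^*=r_t$ is then equivalent to showing $V^*(s_{t+1})=0$ for every reachable $s_{t+1}$.

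The core lemma I would prove is the following. If every reward from step $t+1$ onward is zero along all continuations of $s_{t+1}$, then $V^*(s_{t+1})=0$. I would prove it by backward induction on $t'$ from $T$ down to $t+1$:
\begin{itemize}
  \item The base case is the terminal condition $V^*(s_{T+1})=0$.
  \item For the inductive step, suppose $Q^*(s_{t'},z)=0+V^*(s_{t'+1})=0$ for all $z$.
  \item Then $V^*(s_{t'})=\alpha\log\sum_z\pi_{\mathrm{ref}}(z\mid s_{t'})=\alpha\log 1=0$.
\end{itemize}
The key observation is that the KL penalty is zero when the policy equals $\pi_{\mathrm{ref}}$, so a zero-reward future has zero soft value. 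This gives no bonus and no cost.

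It remains to check the hypothesis of the lemma, and I would split by cases on $r_t(s_t,z_t)$. If $r_t(s_t,z_t)=0$, Definition~\ref{def:reward_structure} directly gives $r_{t'}\equiv 0$ for all $t'>t$ and all tokens. The lemma then yields $V^*(s_{t+1})=0$, hence $Q^*=r_t=0$.

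The case $r_t(s_t,z_t)\neq 0$ is the main obstacle. Read literally, the implication in Eq.~\eqref{eq:vanishing_def} says nothing about future rewards after a nonzero reward, and a nonzero $V^*(s_{t+1})$ would break the claim. I would close this gap using the semantics stated in Section~\ref{sec:formulation} and Section~\ref{sec:collaborative}: once a token steers the prefix into a region, that region becomes explored by the batch, because the Distiller is trained on it. Consequently the step $t+1$ reward is $r_{t+1}=0$ for every token, and Definition~\ref{def:reward_structure} propagates this to all $t'>t+1$.

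Concretely, I would make this reading explicit as part of the hypothesis. The reward is then "first-visit only": a nonzero reward at step $t$ implies $r_{t+1}(s_{t+1},\cdot)\equiv 0$. This is the natural meaning of ``subsequent visits yield near-zero reward.'' If I could not justify it from the definition alone, I would state it as the precise form of vanishing redundancy being assumed. With it, the lemma applies to $s_{t+1}$ in both cases, so $V^*(s_{t+1})=0$ and $Q^*(s_t,z_t)=r_t(s_t,z_t)$ for all $(s_t,z_t)$.

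Finally, the policy step follows immediately. Substituting $Q^*=r_t$ into $\pi^*\propto\pi_{\mathrm{ref}}\exp(Q^*/\alpha)$ recovers Eq.~\eqref{eq:optimal_policy}.
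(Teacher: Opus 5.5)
Your proposal is correct and takes essentially the same route as the paper. The paper also writes $Q^*(s_t,z_t)=r_t(s_t,z_t)+V^*(s_{t+1})$, splits on whether $r_t$ vanishes, and concludes that $V^*(s_{t+1})=0$ in both cases. Its Case 2 rests on exactly the ``first-visit only'' reading you make explicit: it asserts that the reward transitions to zero once the region is explored. It also takes $V^*(s_{t+1})=0$ for granted where you justify it by log-sum-exp backward induction, so your version is if anything more careful than the paper's.
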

\begin{proof}
In the sequential formulation, $Q^*(s_t, z_t) = r_t(s_t, z_t) + \gamma\,\mathbb{E}[V^*(s_{t+1})]$.

\noindent\textbf{Case 1} ($r_t = 0$): By Definition~\ref{def:reward_structure}, all future rewards are zero. Hence $V^*(s_{t+1}) = 0$ and $Q^*(s_t, z_t) = 0 = r_t(s_t, z_t)$.

\noindent\textbf{Case 2} ($r_t > 0$): The current step is the first visit to a novel semantic region. After this observation, the reward for this region transitions to zero (the region has now been explored). Applying Eq.~(\ref{eq:vanishing_def}) from step $t+1$ onward, all subsequent rewards within this region are zero. Therefore $V^*(s_{t+1}) = 0$ and $Q^*(s_t, z_t) = r_t(s_t, z_t)$.
\end{proof}

\noindent Consequently, the optimal sequential policy $\pi^* \propto \pi_{\mathrm{ref}} \exp(Q^*/\alpha)$ reduces to $\pi^* \propto \pi_{\mathrm{ref}} \exp(r/\alpha)$, which is precisely Eq.~(\ref{eq:optimal_policy}).

\paragraph{Latent Distiller as a practical estimator.}
Definition~\ref{def:reward_structure} is an idealized specification. The Latent Distiller provides a continuous approximation whose quality rests on two empirical properties.

\begin{assumption}[Rapid Fitting]
\label{asm:rapid}
Under appropriate training configuration (e.g., aggressive learning rate, lightweight architecture), the Distiller reduces its prediction error on a newly observed representation mapping $(h^1_t \to h^L_t)$ to near zero within a single gradient step.
\end{assumption}

\begin{assumption}[Local Generalization]
\label{asm:local_gen}
After fitting a mapping $(h^1_t \to h^L_t)$, the Distiller maintains low prediction error for neighboring mappings $(h^1_{t'} \to h^L_{t'})$ with $\|h^1_{t'} - h^1_t\| < \epsilon$.
\end{assumption}

\noindent Together, these properties ensure that the Distiller's prediction error approximates the vanishing redundancy property: once a semantic region is explored, the Distiller assigns near-zero novelty to that region and its neighborhood, mirroring the behavior specified in Definition~\ref{def:reward_structure}.

\paragraph{Adaptivity beyond the ideal.}
Notably, the Distiller does not strictly satisfy Definition~\ref{def:reward_structure}, and this is beneficial. The ideal specification is rigid: once a region is marked as explored, it remains so permanently. The Distiller, by contrast, evaluates novelty at each step via its current prediction error. If a trajectory whose prefix resides in a well-explored region later encounters a genuinely novel representation mapping (one far from any previously observed mapping), the Distiller will detect this and assign a positive reward. This makes the Distiller a soft relaxation of the ideal: it suppresses redundancy where it persists, while remaining responsive to genuine novelty whenever it emerges.

\paragraph{Empirical support.}
Figure~\ref{fig:dynamics} corroborates both assumptions: the monotonically decreasing pairwise cosine similarity under \ES confirms that semantic divergence induced at early steps is sustained throughout generation (consistent with local generalization), while the Distiller's rapid adaptation to newly observed patterns supports the rapid fitting assumption.

\section{Engineering the Parallel Pipeline}
\label{app:overlap}

To achieve the virtually zero-overhead behavior described in Section \ref{sec:implementation}, we implement a custom runtime environment bypassing the standard Python Global Interpreter Lock (GIL) limitations during the forward pass. The implementation relies on three key engineering pillars:

\subsection{Asynchronous CUDA Streams}
We establish a dual-stream architecture. The \verb|Main Stream| handles the standard vLLM execution graph. A dedicated high-priority \verb|Distiller Stream| handles the MLP computations. Synchronization is managed strictly via CUDA Events (\verb|cudaEventRecord| and \verb|cudaEventWait|). Unlike CPU barriers (e.g., \verb|torch.cuda.synchronize()|), CUDA Events operate directly on the GPU command processor. This allows the CPU to dispatch the entire instruction chain for both streams without halting, effectively hiding the kernel launch overheads.

\subsection{Pre-allocated ``In-Graph'' Memory}
Dynamic memory allocation (\verb|malloc|) is a significant source of latency. We implement a static GPU ring buffer to manage hidden states.
\begin{itemize}
    \item Write Operation: We inject a custom kernel at the output of the LLM's first layer to write the hidden states directly to the buffer.
    \item Read Operation: The Distiller reads from this buffer asynchronously.
\end{itemize}

Crucially, no data is copied to the CPU RAM at any stage of the pipeline. All operations, including the distillation loss calculation and weight updates, occur in high-bandwidth GPU memory (HBM), avoiding the PCIe bottleneck.

\subsection{Mixed-Batch Guardrails}
In production environments using continuous batching (e.g., vLLM), a single batch may contain both prefill (prompt processing) and decode (token generation) requests. Since our Distiller is optimized for the generation phase, we implement a lightweight metadata check. The pipeline activates only for decode-phase tokens. For mixed batches or prefill-heavy steps, the Distiller is dynamically bypassed. Given that autoregressive generation typically spans hundreds of steps compared to a single prefill step, this selective bypassing has negligible impact on model convergence.

\subsection{Latency Analysis}
The ``slack'' available for Distiller inference is determined by the execution time of layers  to . For a standard Llama-3-8B model on an A100 GPU, this interval is approximately 15-20ms. The Distiller (a 2-layer MLP) requires less than 0.5ms. This massive margin guarantees that the Distiller's logits are always available before the logit fusion is launched.

\section{Additional Analyses}
\label{app:review_additional}

In this session, we try to answer several questions about the sensitivity of \ES, the validity of the latent novelty signal, statistical stability, baseline fairness, composability with other decoding or aggregation methods, and the engineering overhead of the practical implementation. We summarize the additional experiments and clarifications here. These results complement the main paper without changing the core method or the conclusions.

\subsection{Hyperparameter Sensitivity Across Model Scales}
\label{app:beta_sensitivity}

\ES introduces one primary exploration-strength hyperparameter, $\beta$. The hidden-layer choice is not treated as a tunable hyperparameter in our implementation: the Distiller is architecturally fixed to map the first-layer hidden state to the final-layer hidden state. This design is dictated by the asynchronous pipeline in Section~\ref{sec:implementation}. Using the earliest available hidden state maximizes the overlap window between the Distiller computation and the remaining LLM forward pass, which is crucial for maintaining low runtime overhead.

To examine whether $\beta$ requires model-specific tuning, we evaluate three values of $\beta$ across the Qwen3 model family on AIME24. The results are shown in Table~\ref{tab:beta_sensitivity_scale}. A single setting, $\beta=0.25$, performs consistently well across model scales.

\begin{table}[h]
\centering
\caption{Sensitivity of \ES to $\beta$ across Qwen3 model scales on AIME24.}
\label{tab:beta_sensitivity_scale}
\begin{tabular}{lcccc}
\toprule
Model & Metric & $\beta=0.1$ & $\beta=0.25$ & $\beta=0.4$ \\
\midrule
Qwen3-4B  & Pass@16 / Pass@64 & 70.7 / 83.3 & 68.9 / 80.0 & 68.7 / 76.7 \\
Qwen3-8B  & Pass@16 / Pass@64 & 68.7 / 76.7 & 70.0 / 80.0 & 67.3 / 76.7 \\
Qwen3-14B & Pass@16 / Pass@64 & 69.6 / 76.7 & 72.2 / 83.3 & 70.5 / 76.7 \\
\bottomrule
\end{tabular}
\end{table}

The results suggest that \ES is not highly sensitive to small changes in $\beta$. In particular, $\beta=0.25$ provides a robust default without per-model tuning. We use this value in the main experiments unless otherwise specified.

\subsection{Pass@1 Accuracy and the Exploration--Grounding Trade-off}
\label{app:pass1_accuracy}

A natural concern is that encouraging exploration may reduce per-sample accuracy, especially if the novelty signal were to push generation toward less grounded continuations. \ES avoids this failure mode by preserving the base LLM's full forward computation. The final-layer representation and logits are computed exactly as in standard decoding. The Distiller does not replace the LLM hidden state and does not directly generate tokens from shallow-layer representations; it only predicts the final-layer hidden state from the shallow-layer hidden state and uses the prediction discrepancy to reweight the LLM's own candidate-token logits.

To make this point empirically explicit, we report Pass@1 results across the main reasoning and coding benchmarks. Pass@1 measures average single-sample accuracy and is therefore a direct indicator of whether \ES degrades per-sample correctness.

\begin{table}[h]
\centering
\caption{Pass@1 on AIME25.}
\label{tab:pass1_aime25}
\begin{tabular}{lccccc}
\toprule
Model & Vanilla & Min-p & FIRE & OverRIDE & \ES \\
\midrule
Qwen2.5-7B   & 6.6  & 8.3  & 6.5  & 7.4  & 6.0  \\
Qwen3-8B     & 38.0 & 38.6 & 42.4 & 37.8 & 39.1 \\
GPT-OSS-20B  & 49.0 & 53.8 & 53.5 & 53.3 & 54.2 \\
\bottomrule
\end{tabular}
\end{table}

\begin{table}[h]
\centering
\caption{Pass@1 on AIME24.}
\label{tab:pass1_aime24}
\begin{tabular}{lccccc}
\toprule
Model & Vanilla & Min-p & FIRE & OverRIDE & \ES \\
\midrule
Qwen2.5-7B   & 10.7 & 10.3 & 9.2  & 10.8 & 9.5  \\
Qwen3-8B     & 38.1 & 38.4 & 40.4 & 38.0 & 41.0 \\
GPT-OSS-20B  & 57.2 & 58.7 & 57.8 & 58.8 & 62.7 \\
\bottomrule
\end{tabular}
\end{table}

\begin{table}[h]
\centering
\caption{Pass@1 on LiveCodeBench v5 with context length 4096.}
\label{tab:pass1_lcb4096}
\begin{tabular}{lccccc}
\toprule
Model & Vanilla & Min-p & FIRE & OverRIDE & \ES \\
\midrule
Qwen2.5-7B   & 13.0 & 13.5 & 12.9 & 13.6 & 13.1 \\
Qwen3-8B     & 10.0 & 10.0 & 11.0 & 10.3 & 12.0 \\
GPT-OSS-20B  & 43.4 & 45.4 & 45.2 & 44.6 & 51.8 \\
\bottomrule
\end{tabular}
\end{table}

\begin{table}[h]
\centering
\caption{Pass@1 on LiveCodeBench v5 with context length 16384.}
\label{tab:pass1_lcb16384}
\begin{tabular}{lccccc}
\toprule
Model & Vanilla & Min-p & FIRE & OverRIDE & \ES \\
\midrule
Qwen2.5-7B   & 13.0 & 13.5 & 12.9 & 13.6 & 13.1 \\
Qwen3-8B     & 19.2 & 19.3 & 18.3 & 20.7 & 22.6 \\
GPT-OSS-20B  & 52.2 & 52.5 & 51.2 & 57.6 & 61.5 \\
\bottomrule
\end{tabular}
\end{table}

\begin{table}[h]
\centering
\caption{Pass@1 on GPQA-Diamond.}
\label{tab:pass1_gpqa}
\begin{tabular}{lccccc}
\toprule
Model & Vanilla & Min-p & FIRE & OverRIDE & \ES \\
\midrule
Qwen2.5-7B   & 34.1 & 34.1 & 34.6 & 34.8 & 34.3 \\
Qwen3-8B     & 45.9 & 46.3 & 48.6 & 46.0 & 46.6 \\
GPT-OSS-20B  & 62.4 & 62.4 & 62.3 & 62.5 & 63.2 \\
\bottomrule
\end{tabular}
\end{table}

\ES matches or exceeds Vanilla in the majority of settings. We do observe small Pass@1 decreases in a few cases, such as Qwen2.5-7B on AIME25. This is consistent with the fact that \ES primarily targets candidate-set coverage rather than optimizing the single most likely sample. However, the results do not show a systematic decline in per-sample accuracy. In several settings, \ES substantially improves Pass@1, for example GPT-OSS-20B on AIME24 and LiveCodeBench v5.

\subsection{Does Latent Prediction Error Encode Structured Novelty?}
\label{app:latent_error_validation}

The central mechanism of \ES is that Distiller prediction error in representation space provides a useful novelty signal. To test whether the improvement is caused by structured information in the error vector rather than arbitrary perturbation, we replace the true Distiller error vector with a Gaussian vector of matched magnitude while keeping the rest of the decoding procedure unchanged. Table~\ref{tab:noise_ablation} reports the results on Qwen3-8B.

\begin{table}[h]
\centering
\caption{Noise ablation on Qwen3-8B. Magnitude-matched random noise does not reproduce the gains of \ES, suggesting that the direction of the Distiller error vector carries structured information.}
\label{tab:noise_ablation}
\begin{tabular}{lcccc}
\toprule
Method & AIME25 Pass@16 & AIME25 Pass@64 & AIME24 Pass@16 & AIME24 Pass@64 \\
\midrule
Vanilla           & 53.6 & 58.9 & 66.6 & 70.0 \\
\ES + Random Noise & 52.4 & 60.0 & 66.6 & 70.0 \\
\ES                & 57.0 & 63.9 & 70.0 & 80.0 \\
\bottomrule
\end{tabular}
\end{table}

The random-noise variant collapses to approximately Vanilla-level performance, whereas the true \ES error vector produces substantial gains. This supports the interpretation that \ES is not merely injecting noise into decoding. Rather, the error direction contains structured information about representation-space patterns that the online Distiller has not yet fit.

\subsection{Latent-Space Versus Vocabulary-Space Novelty}
\label{app:vocab_space_ablation}

To isolate the benefit of estimating novelty in representation space, we construct a vocabulary-space Distiller baseline. This variant uses the same MLP structure, but projects through the frozen LM head and is trained online with a KL objective in vocabulary space. The logit fusion follows the same form as \ES, so the main difference is the space in which novelty is estimated.

\begin{table}[h]
\centering
\caption{Latent-space \ES compared with a vocabulary-space Distiller on Qwen3-8B.}
\label{tab:vocab_space_ablation}
\begin{tabular}{lcccc}
\toprule
Method & AIME25 Pass@16 & AIME25 Pass@64 & AIME24 Pass@16 & AIME24 Pass@64 \\
\midrule
Vanilla               & 53.6 & 58.9 & 66.6 & 70.0 \\
Vocab-Space Distiller & 37.4 & 43.3 & 47.6 & 53.3 \\
OverRIDE              & 55.0 & 60.6 & 63.4 & 70.0 \\
\ES (Latent)           & 57.0 & 63.9 & 70.0 & 80.0 \\
\bottomrule
\end{tabular}
\end{table}

The vocabulary-space variant is unstable and substantially underperforms latent-space \ES. We attribute this to the difficulty of online learning over the high-dimensional discrete vocabulary distribution, where KL gradients can be noisy and sensitive to the tail of the distribution. In contrast, \ES operates in a compact continuous representation space, making the online Distiller both more stable and more useful for exploration.

\subsection{Multi-Seed Stability}
\label{app:multi_seed}

Because \ES is a stochastic decoding method, we additionally report three-seed results on Qwen3-8B using seeds 41, 42, and 43. Table~\ref{tab:multi_seed} reports the mean and standard deviation.

\begin{table}[h]
\centering
\caption{Three-seed results on Qwen3-8B.}
\label{tab:multi_seed}
\begin{tabular}{lccccc}
\toprule
Benchmark & Method & Pass@8 & Pass@16 & Pass@32 & Pass@64 \\
\midrule
AIME24 & Vanilla & $61.2 \pm 0.1$ & $66.6 \pm 0.8$ & $69.4 \pm 1.5$ & $70.0 \pm 3.3$ \\
AIME24 & \ES      & $61.8 \pm 0.2$ & $68.5 \pm 0.8$ & $74.6 \pm 1.5$ & $80.0 \pm 0.0$ \\
AIME25 & Vanilla & $51.4 \pm 1.0$ & $53.6 \pm 0.6$ & $56.0 \pm 1.0$ & $58.9 \pm 0.9$ \\
AIME25 & \ES      & $46.2 \pm 0.9$ & $53.6 \pm 1.0$ & $61.3 \pm 0.4$ & $67.8 \pm 1.9$ \\
\bottomrule
\end{tabular}
\end{table}

The results are stable across seeds. On AIME25, \ES trades lower Pass@8 for stronger Pass@32 and Pass@64, which reflects its intended use case: improving coverage when multiple candidates are sampled for test-time scaling.

\subsection{Baseline Hyperparameter Tuning}
\label{app:baseline_tuning}

To ensure comparable tuning effort, we use three-point grids for the main decoding baselines. Table~\ref{tab:baseline_tuning} reports the AIME24 Pass@64 results on Qwen3-8B.

\begin{table}[h]
\centering
\caption{Baseline hyperparameter grids on AIME24 with Qwen3-8B.}
\label{tab:baseline_tuning}
\begin{tabular}{lccc}
\toprule
Method & Setting 1 & Setting 2 & Setting 3 \\
\midrule
Min-p, $p_{\mathrm{base}}\in\{0.03,0.1,0.3\}$       & 73.3 & 76.7 & 76.7 \\
OverRIDE, $\lambda\in\{0.6,0.8,1.0\}$               & 73.3 & 76.7 & 76.7 \\
FIRE, $T\in\{10,30,50\}$                            & 73.3 & 73.3 & 73.3 \\
\ES, $\beta\in\{0.1,0.25,0.4\}$                      & 76.7 & 80.0 & 76.7 \\
\bottomrule
\end{tabular}
\end{table}

Even the weakest \ES setting matches or exceeds the best-tuned baseline settings in this comparison, while the default $\beta=0.25$ achieves the best result.

\subsection{Surface Entropy Versus Latent Prediction Error}
\label{app:entropy_adaptive}

To compare \ES against a surface-level diversity signal, we add an entropy-adaptive decoding baseline that adjusts the candidate set based on each token candidate's contribution to the normalized entropy of the token distribution. This is representative of methods that operate directly on vocabulary-space uncertainty.

\begin{table}[h]
\centering
\caption{Comparison with an entropy-adaptive decoding baseline on Qwen2.5-7B-Instruct. Higher Vendi and lower similarity indicate greater semantic diversity.}
\label{tab:entropy_adaptive}
\begin{tabular}{lcc}
\toprule
Method & Vendi $\uparrow$ & Similarity $\downarrow$ \\
\midrule
Vanilla          & 1.6403 & 0.5845 \\
Entropy-Adaptive & 1.6455 & 0.5830 \\
\ES               & 1.6698 & 0.5713 \\
\bottomrule
\end{tabular}
\end{table}

\ES achieves higher diversity than the entropy-adaptive baseline. This suggests that representation-space novelty captures semantic variation that is not fully captured by token-level entropy alone.

\subsection{Composability with FIRE and Self-Consistency}
\label{app:composability}

\ES operates during decoding, while many test-time scaling methods operate either by changing the sampling schedule or by aggregating completed candidate answers. This makes \ES naturally composable with other methods.

First, we combine \ES with FIRE on Qwen3-8B / AIME24. FIRE modifies the temperature schedule, while \ES modifies candidate-token preferences using representation-space novelty.

\begin{table}[h]
\centering
\caption{Composing \ES with FIRE on Qwen3-8B / AIME24.}
\label{tab:fire_es}
\begin{tabular}{lcccc}
\toprule
Method & Pass@1 & Pass@4 & Pass@16 & Pass@64 \\
\midrule
Vanilla   & 38.1 & 54.6 & 66.7 & 70.0 \\
FIRE      & 41.4 & 54.2 & 65.8 & 73.6 \\
\ES        & 41.0 & 56.7 & 70.2 & 80.0 \\
FIRE + \ES & 38.4 & 54.4 & 69.0 & 83.3 \\
\bottomrule
\end{tabular}
\end{table}

The combination improves Pass@64 beyond either method alone, indicating that \ES is not tied to a particular temperature schedule and can complement other decoding-time interventions.

Second, we evaluate compatibility with Self-Consistency (SC), which aggregates completed answers by majority voting. \ES promotes candidate diversity, whereas SC rewards convergence toward the most frequent answer, so the two objectives are not perfectly aligned. Nevertheless, \ES remains compatible with SC and provides slight improvements at larger budgets.

\begin{table}[h]
\centering
\caption{Composing \ES with Self-Consistency on Qwen3-8B / AIME24.}
\label{tab:sc_es}
\begin{tabular}{lccc}
\toprule
Method & Maj@8 & Maj@16 & Maj@32 \\
\midrule
Vanilla + SC & 50.3 & 52.6 & 53.7 \\
\ES + SC      & 50.0 & 52.8 & 54.5 \\
\bottomrule
\end{tabular}
\end{table}

These results suggest that \ES is especially well suited to selection-based test-time scaling, such as Pass@k evaluation or reward-model reranking, where diversity directly improves solution coverage. Its interaction with majority-vote aggregation is more conservative but still non-negative at larger budgets.

\subsection{Distiller Architecture Robustness}
\label{app:distiller_architecture}

We ablate the Distiller architecture on Qwen3-8B / AIME25. The default Distiller is a two-layer Gated SwiGLU MLP. We compare it with deeper and simpler alternatives.

\begin{table}[h]
\centering
\caption{Distiller architecture ablation on Qwen3-8B / AIME25.}
\label{tab:distiller_architecture}
\begin{tabular}{lccc}
\toprule
Architecture & Pass@1 & Pass@16 & Pass@64 \\
\midrule
2-layer Gated SwiGLU (default) & 41.0 & 66.5 & 80.0 \\
4-layer Gated SwiGLU           & 38.9 & 66.7 & 80.0 \\
4-layer Plain MLP              & 38.9 & 66.7 & 80.0 \\
\bottomrule
\end{tabular}
\end{table}

The Pass@k results are robust across architectures. We therefore choose the two-layer Gated SwiGLU Distiller as the default because it provides the same high-budget coverage with lower computational cost.

\subsection{Shared Versus Per-Prompt Distillers}
\label{app:shared_distiller}

We also compare shared and per-prompt Distiller variants. In the shared setting, a single Distiller is updated from all prompts in a batch. In the per-prompt setting, each prompt maintains its own Distiller. Table~\ref{tab:shared_distiller} shows that the preferred setting can depend on task structure.

\begin{table}[h]
\centering
\caption{Shared versus per-prompt Distillers.}
\label{tab:shared_distiller}
\begin{tabular}{lcc}
\toprule
Benchmark & Shared & Per-Prompt \\
\midrule
AIME24 Pass@16 / Pass@64 & 64.7 / 76.6 & 70.0 / 80.0 \\
AIME25 Pass@16 / Pass@64 & 55.3 / 63.3 & 57.0 / 63.9 \\
LCB v5 Pass@16           & 25.8        & 24.1        \\
\bottomrule
\end{tabular}
\end{table}

Per-prompt Distillers perform better on AIME, where different problems can have heterogeneous reasoning structures and a shared Distiller may introduce cross-prompt interference. On LiveCodeBench, the shared Distiller slightly improves Pass@16, likely because the larger effective batch provides a stronger online learning signal. We view adaptive sharing strategies as a promising direction for future work.

\subsection{LLM-as-Judge Evaluation for Creative Writing}
\label{app:llm_judge}

Automatic diversity metrics may not fully capture whether generations differ in meaningful creative content. We therefore add a single-blind LLM-as-judge evaluation on 2,000 BookCorpus prompts with Gemini 3 Flash. The judge compares 16 parallel generations per prompt and reports average diversity and quality ranks, where lower is better. The results are in figure~\ref{tab:llm_judge}.

\begin{table}[h]
\centering
\caption{Single-blind LLM-as-judge evaluation on BookCorpus prompts. Lower rank is better.}
\label{tab:llm_judge}
\begin{tabular}{lcc}
\toprule
Method & Diversity Rank $\downarrow$ & Quality Rank $\downarrow$ \\
\midrule
Vanilla  & 1.97 & 1.83 \\
OverRIDE & 2.40 & 2.20 \\
\ES       & 1.63 & 1.97 \\
\bottomrule
\end{tabular}
\end{table}

\ES obtains the best diversity rank while maintaining quality close to Vanilla. This supports the conclusion that \ES improves meaningful variation rather than merely increasing surface-level randomness.

\section{The tLLM Framework and Practical \ES Optimizations}
\label{app:tllm}

The implementation used in the main paper was an internal research prototype designed to validate the algorithmic idea of \ES. After the paper experiments, we further engineered an open-source implementation, \ES, in the tLLM framework:
\[
\text{\url{https://github.com/LinesHogan/tLLM}}.
\]
The open-source version achieves higher measured throughput than the internal implementation reported in the main paper, while deliberately preserving a general interface for future runtime-adaptation algorithms.

\subsection{tLLM as a Runtime Layer for Test-Time Intervention}
\label{app:tllm_framework}

tLLM is a runtime layer built on top of the vLLM v1 inference engine. Its goal is to support test-time algorithms that need to read model-internal states, run asynchronous side computation, and optionally guide sampling during generation, without requiring researchers to maintain a private fork of vLLM.

The framework exposes a producer--consumer abstraction. Producers capture runtime tensors and metadata from the vLLM execution path, such as hidden states and request-row localization information. Consumers declare the data they need and receive runtime bundles during generation. This design makes it possible to implement algorithms such as \ES as external consumers rather than by directly rewriting the vLLM worker, model runner, or sampler internals.

For \ES, the consumer captures shallow and deep hidden states, trains the online Distiller during generation, and optionally modifies candidate-token logits after the base sampler filtering stage. This preserves the core \ES algorithm while making the implementation easier to inspect, benchmark, and extend.

\subsection{Engineering Optimizations in \ES}
\label{app:esamp_optimizations}

The open-source \ES implementation includes several engineering optimizations beyond the internal research prototype used for the main-paper throughput measurement.

\paragraph{Runtime hooks without maintaining a vLLM fork.}
tLLM installs hooks around key vLLM lifecycle points, including model loading, input preparation, model execution, logit computation, and sampling. This allows \ES to capture hidden states and provide sampler guidance through a stable runtime interface rather than through invasive modifications of vLLM internals.

\paragraph{Producer--consumer data delivery.}
Captured tensors are localized to the active request rows and delivered to the \ES consumer through runtime bundles. This avoids ad hoc data plumbing and makes the hidden-state path explicit. It also enables functional counters, such as Distiller loss counts and sampler-guidance counters, which may help third-party developers to ensure that throughput benchmarks are not accidentally measuring a disabled or no-op algorithm.

\paragraph{Post-filter candidate intervention.}
Instead of projecting the Distiller prediction across the full vocabulary on the sampler hot path, \ES can apply the intervention only to the candidate tokens retained after the LLM sampler's filtering stage. The implemented intervention follows the same form as the paper.
Restricting the intervention to the filtered candidate set substantially reduces overhead while preserving the intended \ES behavior.

\paragraph{CUDA graph support for Distiller updates.}
The optimized benchmark path supports CUDA graph capture for the Distiller update. This reduces repeated kernel-launch overhead in the online training loop and makes the Distiller update path more predictable during autoregressive generation.

\paragraph{Triton grouped prediction backend.}
For CUDA/Qwen throughput experiments, \ES provides an optional Triton grouped backend for the no-gradient Distiller prediction and sampling path. Training and autograd can remain in PyTorch, while the latency-sensitive prediction path benefits from a more specialized GPU implementation.

\paragraph{Compatibility with modern vLLM inference optimizations.}
The open-source implement is aligned with a modern vLLM V1 baseline using CUDA Graph execution, FlashInfer sampling, bfloat16 weights, prefix-cache control for fair measurement, and the same sampling workload. \ES is designed to preserve these inference-engine optimizations rather than replacing them with a slower research-only runtime.

\subsection{Open-Source Throughput}
\label{app:open_source_throughput}

Table~\ref{tab:tllm_throughput} reports the representative open-source throughput benchmark from the optimized \ES path. The benchmark uses Qwen2.5-7B, batch size 8, $n=16$, and an active min-p sampling path on an RTX 4090 GPU. Throughput is measured against an optimized vLLM baseline under the same workload.

\begin{table}[h]
\centering
\caption{Representative open-source \ES throughput in tLLM. The optimized \ES implementation preserves most of the vLLM baseline throughput while running the \ES runtime-adaptation path.}
\label{tab:tllm_throughput}
\begin{tabular}{lccc}
\toprule
Model / Workload & Optimized vLLM Baseline & \ES (Triton Kernel) & Ratio \\
\midrule
Qwen2.5-7B, batch=8, $n=16$, min-p active path
& 5370.616 tok/s & 5304.855 tok/s & 0.9878 \\
\bottomrule
\end{tabular}
\end{table}

This corresponds to approximately 98.8\% of the optimized vLLM baseline throughput in the aligned open-source benchmark. Equivalently, the measured throughput reduction is about 1.2\%. This number is higher than the conservative internal throughput result reported in the main paper.

We emphasize that the open-source implementation is not a fully task-specialized kernel-only implementation. It intentionally preserves a general producer--consumer interface, runtime validation counters, configurable Distiller paths, and extension points for future test-time learning algorithms. If one removes these generality and extensibility constraints, further throughput improvements should be possible. Therefore, the throughput in Table~\ref{tab:tllm_throughput} should be interpreted as a practical open-source framework result rather than an absolute upper bound on \ES efficiency.

\subsection{Scope of the Open-Source Implementation}
\label{app:tllm_scope}

The main paper focuses on the algorithmic contribution of \ES: using online latent distillation to encourage representation-space exploration during decoding. tLLM and \ES provide the accompanying systems path for reproducing and extending this idea in a high-throughput inference engine. Beyond \ES, the same framework can support other test-time intervention methods, including activation analysis, hidden-state export or editing, online auxiliary-model training, and candidate-level sampler guidance.

This separation is intentional. \ES defines the decoding objective and novelty signal, while tLLM provides a practical runtime substrate for implementing \ES-like algorithms under realistic serving constraints.
 
\section{Experiment Details}

\subsection{Benchmarks and Metrics}

\paragraph{Benchmarks}
In our main experiments, we evaluated our method on \textbf{AIME 2024}, \textbf{AIME 2025}, \textbf{LiveCodeBench v5}, and \textbf{GPQA-Diamond}. All benchmarks were evaluated using the \texttt{lighteval} framework \cite{lighteval}. We utilized the official data and evaluation code provided by \texttt{lighteval}. The benchmarks are detailed as follows:

\begin{itemize}
    \item \textbf{AIME 2024 / 2025}: The American Invitational Mathematics Examination (AIME \cite{aime24, aime25}) datasets serve as a standard for evaluating advanced mathematical reasoning. These problems are designed for high-performing high school students and require multi-step logical deduction to reach a final integer answer between 0 and 999. We use both the 2024 and 2025 iterations to assess the model's performance on recent, complex mathematical tasks.
    
    \item \textbf{LiveCodeBench v5}: This is a holistic benchmark for code generation \cite{jain2024livecodebench} that evaluates models on competitive programming problems collected from platforms like LeetCode, AtCoder, and Codeforces. A key feature of LiveCodeBench is its time-sensitive nature; it specifically targets problems released after the training cutoff of most large language models to minimize data contamination. We employ the v5 release to test the model's ability to generate correct and efficient Python code for novel problem specifications.
    
    \item \textbf{GPQA-Diamond}: The Graduate-Level Google-Proof Q\&A (GPQA) benchmark \cite{rein2024gpqa} consists of challenging multiple-choice questions covering biology, physics, and chemistry. We utilize the ``Diamond'' subset, which represents the highest quality tier where questions have been double-verified by domain experts. This benchmark is designed to be resistant to simple information retrieval, requiring deep scientific reasoning and domain knowledge to solve.
\end{itemize}

To evaluate diversity in creative writing, we incorporated the \textbf{BookCorpus} dataset \cite{zhu2015aligning-bookcurpos}. We used the cleaned version provided by \texttt{incredible45/Gutenberg-BookCorpus-Cleaned-Data-English} on HuggingFace, which contains approximately 51,442 classic literary works. Since this dataset lacks explicit source text delimiters (e.g., headers or metadata like \texttt{robots.txt}), using the first few tokens as a prompt often results in non-narrative generation that fails to meet creative writing evaluation standards. To address this, we split each text in the middle and used the last 32 words of the first half as the prompt to generate the continuation without using a chat template. 
Given the sheer size of the dataset, evaluating the full set was computationally prohibitive. Therefore, we sampled 2,000 stories using a fixed random seed (\texttt{seed=42}) for generation and evaluation.

\paragraph{Evaluation Metrics}
We employed the following metrics to assess performance and diversity:
\begin{itemize}
    \item \textbf{Pass@k}: The probability that at least one correct solution exists among $k$ generated samples.
    \item \textbf{Embedding Similarity}: The average pairwise cosine similarity of the embeddings of the generated responses, measuring semantic redundancy.
    \item \textbf{Vendi Score}: A spectral metric for diversity, calculated using the official implementation from \url{https://github.com/vertaix/Vendi-Score}.
    \item \textbf{Perplexity (PPL)}: Used as a proxy for linguistic fluency and coherence.
\end{itemize}

\subsection{Implementation Details}
\label{app:exp_detail}

The code will be released upon the acceptance of this paper.

\paragraph{Exploratory Sampling Configuration}
The Latent Distiller is implemented as a two-layer Multi-Layer Perceptron (MLP) with residual connections. Each layer consists of a Gated SwiGLU block, where the input and output dimensions align with the host LLM's hidden size, while the intermediate hidden dimension is set to 384.
Regarding the hyperparameters, for inference, the exploration strength was set to $\beta=0.25$. For training, we used the Adam optimizer with a learning rate of $4\times 10^{-4}$, $\epsilon=1\times 10^{-4}$, and a gradient clipping norm of $0.5$. These relatively aggressive optimization settings were chosen because the distiller operates in an online learning scenario, where the distribution of training data and inference data may shift rapidly.

An interesting empirical observation is related to the distiller's scope. Although theoretical formulation suggests that each prompt should maintain an independent distiller to capture its specific trajectories, we found in practice—specifically during AIME development—that sharing a single distiller across the batch did not yield significant performance differences. We hypothesize this is because the mathematical problems in AIME are highly distinct, preventing the ``experience'' from one problem's distiller update from interfering with another. Future work could exploit this property to isolate the training influence of each prompt while optimizing memory usage, potentially reducing the overhead of maintaining multiple distiller states.

\paragraph{vLLM System Details}
All experiments except the throughput experiment were conducted on NVIDIA A100 GPUs. To minimize latency and maximize throughput, we implemented several engineering optimizations:

\begin{itemize}    
    \item \textbf{Latent Space Logit Mixing}: We moved the ``logit mixing'' operation to a stage before the vLLM sampler's filtering (e.g., top-p, min-p) if filter not set. This allows us to exploit the distributive property of matrix multiplication. Specifically, instead of projecting both the reference hidden state $h_{ref}$ and the distilled hidden state $h_{dist}$ to the vocabulary space separately (which requires two large matrix multiplications with the LM Head), we mix them in the latent space: $h_{mix} = (1+\beta)h_{ref} - \beta h_{dist}$. We then perform a single projection $h_{mix} W_{head}$. This optimization reduces the computational cost of the LM Head projection.
    
    \item \textbf{Batched Distiller Operations}: In scenarios with static throughput (fixed computation graph), we batch the inference and update steps of multiple distillers. Since the distiller model is extremely lightweight (consuming less than 10MB of VRAM), the overhead of launching individual CUDA kernels is non-negligible compared to the computation itself. Batching these operations effectively amortizes the kernel launch overhead.
\end{itemize}

\subsection{Beam Search Results}
\label{app:beam}
Beam search is fundamentally designed to find the sequence with the maximum joint probability. However, this objective is often antithetical to the goal of exploration required for reasoning tasks. In our preliminary experiments, both Diverse Beam Search and Stochastic Beam Search exhibited poor performance on the Pass@k metric compared to sampling-based methods. Consequently, we have omitted detailed beam search results from the main paper to avoid clutter and potential confusion regarding the efficacy of exploration strategies.

\begin{figure}
    \centering
    \includegraphics[width=0.4\linewidth]{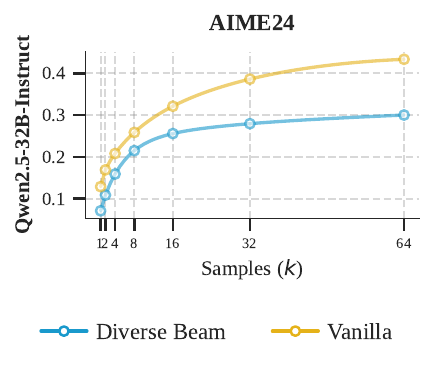}
    \includegraphics[width=0.8\linewidth]{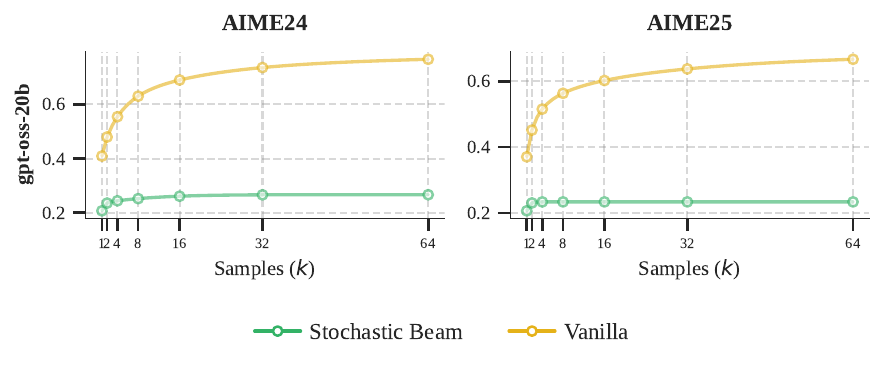}
    \caption{Experiment results in beam search of Pass@k}
    \label{fig:beam_search}
\end{figure}

\subsection{OverRIDE and Contrastive Decoding}

For Contrastive Decoding, we employ Qwen2.5-0.5B-Instruct as the amateur model for the Qwen2.5 series, and Qwen3-0.6B for the Qwen3 series. We omit experiments on GPT-OSS-20B as no corresponding small-scale model shares its vocabulary. 
Regarding OverRIDE, we adopt the hyperparameter configurations specified in its OpenReview Supplementary Material, setting $\lambda = 0.8$, the number of iterations to 10, the adapter rank to 16, and the learning rate to $10^{-3}$.

\subsection{Prompts}
We utilized the official prompts provided by the \texttt{lighteval} framework for each benchmark. The specific templates are listed below:

\paragraph{AIME 2024 / AIME 2025}
\begin{verbatim}
Solve the following math problem efficiently and clearly.  The last line of 
your response should be of the following format: 'Therefore, the final 
answer is: $\boxed{ANSWER}$. I hope it is correct' (without quotes) where 
ANSWER is just the final number or expression that solves the problem. 
Think step by step before answering.

{Question}
\end{verbatim}

\paragraph{LiveCodeBench (Code Generation)}
\begin{verbatim}
You will be given a question (problem specification) and will generate a 
correct Python program that matches the specification and passes all tests.

Question: {question_content}
\end{verbatim}

\noindent If \texttt{starter\_code} is provided:
\begin{verbatim}
You will use the following starter code to write the solution to the 
problem and enclose your code within delimiters.
```python
{starter_code}
```
\end{verbatim}

\noindent If no \texttt{starter\_code} is provided:
\begin{verbatim}
Read the inputs from stdin solve the problem and write the answer to stdout 
(do not directly test on the sample inputs). Enclose your code within 
delimiters as follows. Ensure that when the python program runs, it reads 
the inputs, runs the algorithm and writes output to STDOUT.
```python
# YOUR CODE HERE
```
\end{verbatim}

\paragraph{GPQA}
\begin{verbatim}
Answer the following multiple choice question. The last line of your response 
should be of the following format: 'Answer: $LETTER' (without quotes) where 
LETTER is one of ABCD. Think step by step before answering.

{Question}

A) {A}
B) {B}
C) {C}
D) {D}
\end{verbatim}

\paragraph{Tree of Thoughts (self-review system prompt)}
\begin{verbatim}
You are a Tree-of-Thought {task type, such as mathematics} researcher. 
Explore multiple distinct reasoning branches in parallel, 
keep them concise, and prune clearly wrong branches. 
Summarize the surviving ideas into a final, step-by-step
solution before producing the final numeric answer. Always end your
response with "Final Answer: <number>" where the number is a single
integer between 0 and 999.
\end{verbatim}

\subsection{Case Study}

Due to the very large volume of parallel generation, even if a single mathematical solution only contains 1000 tokens, the total number of tokens across 16 answers exceeds the total number of words in this document. Therefore, we provide an evaluation of the two sets of answers generated by vanilla generation and \ES generation using Gemini 3 Flash Preview. This is a single-blind test; Gemini 3 Flash Preview does not know which generation method it is evaluating. File 1 is generated by \ES while file 2 is by vanilla generation with Qwen2.5-7B-Instruct.

\begin{quote}
\small\textbf{\textsf{Evaluation Output by Gemini 3 Flash Preview:}} \vspace{0.5em}

\noindent\textbf{\large 1. Problem Statement} \\
The problem involves a 9-kilometer walk with a coffee shop stop of $t$ minutes. We are given two scenarios to find the relationship between walking speed $s$ (km/h) and total time:
\begin{itemize}
    \item Scenario 1: Speed $s \implies$ Total time = 4 hours.
    \item Scenario 2: Speed $s+2 \implies$ Total time = 2 hours and 24 minutes (2.4 hours).
\end{itemize}
The goal is to determine the total time required when the walking speed is $s + \frac{1}{2}$ km/h.

\vspace{1em}
\noindent\textbf{\large 2. Analysis of File 1} \\
File 1 demonstrates significant variety in mathematical modeling and execution strategies across its sequences.

\vspace{0.5em}
\noindent\textit{\textbf{Methodological Approaches}}
\begin{itemize}
    \item \textbf{Algebraic Elimination (Standard):} Most sequences solve by subtracting the two distance/time equations to eliminate the variable $t$ and solve for $s$ first.
    \item \textbf{Direct Quadratic for $t$:} Sequences 3 and 14 attempt to solve for the coffee shop time $t$ as the primary variable. This leads to more complex polynomial expansions ($t^2 - 384t + 8640 = 0$), representing a different logical priority.
    \item \textbf{Heuristic / Trial and Error:} Sequence 11 stands out by using a "guess and check" strategy. It assumes plausible values for $t$ (e.g., 60, 48, 30, 24 minutes) and verifies them against the equations. This mimics human intuition rather than rote computation.
    \item \textbf{Unit Management:} There is a mix of working in pure minutes, pure hours, or hybrid fractional forms. Some sequences use $x = t/60$ as a placeholder, while others carry $t$ through large-scale multiplication (e.g., $540 = s(240-t)$).
\end{itemize}

\noindent\textit{\textbf{Observations}} \\
File 1 displays ``computational personality.'' One sequence even shows a self-correction process after encountering a negative discriminant, which illustrates a non-linear problem-solving path.

\vspace{1em}
\noindent\textbf{\large 3. Analysis of File 2} \\
File 2 is highly consistent and follows a standardized "textbook" efficiency.

\vspace{0.5em}
\noindent\textit{\textbf{Methodological Approaches}}
\begin{itemize}
    \item \textbf{Convergent Subtraction Path:} Almost every sequence (Seq 0 through Seq 15) adopts the exact same strategy: convert 2h 24m to 2.4h, set up $9/s + t_{\mathrm{hours}} = 4$ and $9/(s+2) + t_{\mathrm{hours}} = 2.4$, and subtract to find $s$.
    \item \textbf{Uniform Quadratic Formulation:} All sequences converge on the quadratic $s^2 + 2s - 11.25 = 0$.
    \item \textbf{Linear Logic:} The sequences rarely deviate from the order of: 1) Solve $s$, 2) Solve $t$, 3) Solve final time.
\end{itemize}

\noindent\textit{\textbf{Observations}} \\
While the accuracy is high, the nature of the responses suggests a search for the most probable/optimized path. There is little to no variation in the logical framework or the mathematical style.

\vspace{1em}
\noindent\textbf{\large 4. Diversity Scoring} \\
The diversity score is based on the variety of mathematical models, the range of variables prioritized, and the inclusion of non-standard heuristics.

\begin{center}
\captionof{table}{Comparison of Problem-Solving Diversity} 
\small \begin{tabular}{@{}lll@{}}
\toprule
\textbf{Feature} & \textbf{File 1} & \textbf{File 2} \\ \midrule
Modeling Variety & High (Fractional, Linear, Polynomial) & Low (Standardized Fractional) \\
Heuristic Diversity & Included (Trial \& Error, Substitution) & Absent (Pure Algebraic) \\
Unit Handling & Diverse (Minutes, Hours, Mixed) & Uniform (Decimal Hours) \\
Logical Paths & Multiple (Solve $s$ first, solve $t$ first) & Single (Convergent on $s$) \\ \midrule
\textbf{Diversity Score} & \textbf{9.5 / 10} & \textbf{3.0 / 10} \\ \bottomrule
\end{tabular}
\end{center}

\vspace{1em}
\noindent\textbf{\large 5. Conclusion} \\
\textbf{File 1} is the superior file in terms of diversity. It explores the problem through various mathematical lenses, ranging from standard algebra to human-like trial and error. \textbf{File 2}, while correct and efficient, is repetitive and lacks the breadth of thought required for "novel and diverse" output.

\vspace{0.5em}
\hrulefill
\end{quote}

\end{document}